\documentclass[11pt]{article}

\usepackage[final]{acl}

\usepackage{times}
\usepackage{latexsym}
\usepackage{amsmath,amssymb,amsthm,mathtools}
\newtheorem{proposition}{Proposition}

\theoremstyle{definition}

\usepackage{tabularx}
\usepackage[T1]{fontenc}
\usepackage[utf8]{inputenc}

\usepackage{microtype}

\usepackage{inconsolata}

\usepackage{graphicx}
\usepackage{algorithm}
\usepackage{algpseudocode}
\usepackage{booktabs}
\usepackage{multirow}
\usepackage{float}
\usepackage[most]{tcolorbox}
\usepackage{enumitem}
\usepackage{makecell}
\usepackage{nicematrix}
\usepackage[table]{xcolor}

\definecolor{clustergray}{gray}{0.93}

\title{Clustering-Based Balanced Sampling and Allocation with Data Parallelism for High-Performance Fine-Tuning}
\newcommand{\algname}{\textbf{CluSTER}}

\usepackage{caption}

\author{
 \textbf{Hyunjin Kim},
 \textbf{Youngeun Nam},
 \textbf{Jaemin Han},
 \textbf{Wonhyeok Choi},
 \textbf{Jae-Gil Lee\thanks{Corresponding Author.}}
\\
 KAIST
\\
 \texttt{\{hyunjin, youngeun.nam, ggp07150, wonhyeok316, jaegil\}@kaist.ac.kr}
}

\begin{document}
\maketitle
\begin{abstract}
% V.5
% Instruction-tuning datasets for large language models\,(LLMs) are often large, redundant, and imbalanced, limiting efficient adaptation. In na\"ive large-batch fine-tuning, especially with multi-GPU data parallelism\,(DP), overrepresented groups are repeatedly sampled while underrepresented but informative ones remain weakly covered. We propose \algname{}, a \textbf{Clu}ster-aware balanced \textbf{S}ampling framework for \textbf{T}raining \textbf{E}fficient data \textbf{R}eduction in DP instruction tuning. \algname{} curates a representative reduced dataset via gradient-space clustering and DP-aware balanced allocation, ensuring dual-level coverage across clusters and workers, while weighting preserves the original data distribution. This reduces redundant computation and improves training stability without compromising model quality. Across multiple instruction-tuning datasets, \algname{} reduces training time by 64.6\% with almost no accuracy loss over prior sampling and data reduction methods. Code is available at \url{https://anonymous.4open.science/r/CluSTER}.

Instruction-tuning datasets for large language models\,(LLMs) are often large, redundant, and imbalanced, limiting efficient adaptation. Naive large-batch fine-tuning repeatedly includes overrepresented sample groups while weakly covering underrepresented but informative ones, especially under data parallelism\,(DP) across multiple GPUs. We propose \algname{}, a \textbf{Clu}ster-aware balanced \textbf{S}ampling framework for \textbf{T}raining \textbf{E}fficient data \textbf{R}eduction in DP instruction tuning. \algname{} curates a representative reduced dataset through gradient-space clustering and DP-aware balanced allocation, ensuring dual-level coverage across clusters and workers, while preserving the original data distribution by weighted update. As a result, \algname{} reduces redundant computation and improves training stability without compromising model quality. Across multiple instruction-tuning datasets, \algname{} reduces training time by up to 69.6\% with almost no accuracy loss compared to prior sampling and data reduction methods. Code is available at \url{https://github.com/kaist-dmlab/CluSTER}.

\end{abstract}
\section{Introduction}
Fine-tuning is a key mechanism for adapting large language models\,(LLMs) to downstream tasks~\citep{wei2021finetuned}. In particular, instruction tuning has emerged as a standard fine-tuning paradigm for aligning pretrained models with task-specific behaviors and user-facing objectives~\citep{liu2023makes}. At the same time, instruction-tuning datasets are commonly large-scale and heterogeneous, with long-tailed distributions across topics, styles, and difficulty levels~\citep{henning2023survey}. Consequently, effective instruction tuning requires more than computational scaling; it also requires principled strategies for selecting and organizing informative training samples.

A key challenge emerges when large-scale datasets are used in large-batch fine-tuning, where each optimization step aggregates signals from many samples. Training efficiency depends not only on the number of processed samples, but also on whether they provide informative and complementary update directions~\citep{yin2018gradient}. Under random sampling, long-tailed instruction-tuning datasets tend to overrepresent dominant regions, leaving underrepresented yet informative samples insufficiently covered~\citep{li2024superfiltering, liu2023makes}. As a result, computation is spent on redundant training signals, reducing the diversity of useful supervision within each optimization step.

Large-batch LLM fine-tuning is commonly implemented through data parallelism\,(DP). Prior work mainly improves system-level efficiency, such as memory sharding and communication optimization~\citep{dean2012large, li2014parameter, shoeybi2019megatron, huang2019gpipe}, while leaving sample composition and assignment within large batches unaddressed. In DP, the global batch is split across workers, whose local gradients are aggregated into a single update. Thus, optimization depends on the diversity of gradients generated in parallel~\citep{yin2018gradient}. With long-tailed and redundant data, dominant sample groups may be assigned to multiple workers, leaving rarer but informative groups insufficiently covered. Figure~\ref{fig:overview}(a) shows this inefficiency: dominant groups (e.g., red and blue) are repeatedly processed across different workers, leading to redundant gradient computations without proportional gains in convergence.

\begin{figure}[t]
    \centering
    \includegraphics[width=0.495\textwidth, trim={3 7 0 5}, clip]{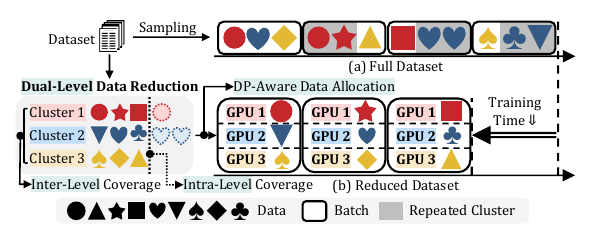} % figure_1_intro_cov.pdf
    \caption{
        Comparison between full- and reduced-dataset training. 
        Colors indicate clusters and shapes denote finer-grained sample types.
        (a)~With the full dataset, redundant and dominant patterns recur across batches and workers. 
        (b)~With the reduced dataset, diverse representative samples are selected with balanced cluster coverage and allocated across GPUs, reducing computation and training time while preserving the data distribution.
    }
    \label{fig:overview}
    \vspace{-0.5cm}
\end{figure}

Our goal is therefore to construct a \emph{reduced} dataset that improves large-batch fine-tuning efficiency by removing redundant and weakly informative samples while preserving full-dataset training behavior. Unlike prior data selection approaches that primarily rank samples by individual informativeness, we focus on how selected samples \emph{collectively} contribute to distributed mini-batch optimization. Thus, \emph{a reduced dataset should not only contain informative samples, but also organize them so that each optimization step covers diverse and complementary training signals}. We define a \emph{cluster} as a coherent sample group that provides a similar training signal. As shown in Figure~\ref{fig:overview}(b), this leads to two coverage requirements. First, across workers, batches should cover different sample groups rather than repeatedly drawing from the same group. Second, within each group, selected samples should remain diverse and complementary by covering different sub-clusters or sample types, represented by the distinct shapes in the figure. We refer to these requirements as \emph{dual-level coverage}: \emph{inter-level coverage} across workers and \emph{intra-level coverage} within clusters.

For this purpose, it is important to identify sample groups according to their roles in model optimization.
Since our objective is to approximate full-dataset optimization with fewer samples, surface-level similarity alone is insufficient: textually different samples can induce similar updates, while similar-looking samples can drive learning in different directions.
To better preserve the loss-surface dynamics of full-dataset training, samples should therefore be grouped by update similarity.
We thus organize samples in the \emph{gradient space}, where samples with similar gradients are expected to guide the model toward similar optimization directions.

We propose \algname{}, a \textbf{Clu}ster-aware balanced \textbf{S}ampling framework for \textbf{T}raining \textbf{E}fficient data \textbf{R}eduction in DP instruction tuning. \algname{} clusters samples in gradient space and builds a reduced dataset via balanced cluster sampling to reduce redundant signals across workers. It promotes \emph{dual-level coverage}: \emph{inter-level coverage} by assigning different clusters to workers, and \emph{intra-level coverage} by selecting diverse, boundary-near samples within clusters.
To reduce majority-group dominance, clusters are under-sampled to the smallest cluster size, while importance-weighted updates preserve the original data distribution.
Overall, \algname{} reduces redundant computation while preserving full-dataset optimization behavior.

\algname{} improves both training and selection preprocessing efficiency across multiple instruction-tuning datasets. Compared with full-dataset instruction tuning, \emph{it reduces training time by \textbf{69.6\%} with almost no degradation in model performance}. The gain comes from replacing repeated updates from redundant samples with more complementary updates across workers.
\section{Related Work}
\subsection{Data Sampling}
% Data sampling is an important factor for optimization stability and generalization in large-scale training.
Data sampling has been recognized as an important factor for optimization stability and generalization in large-scale training.
Random sampling is unbiased but can induce high gradient variance under imbalanced data distributions. Although uniform or stratified sampling mitigates this issue~\cite{loshchilov2016online, katharopoulos2018not}, it is often impractical for text or instruction datasets with imbalanced, dynamic, and implicit clusters.
% Data sampling has been recognized as an important factor for optimization stability and generalization in large-scale training.
% Random sampling provides unbiased gradient estimates but can lead to high gradient variance under highly imbalanced data distributions.
% Although uniform or stratified sampling methods have been proposed to ensure that mini-batches better reflect the global data distribution~\cite{loshchilov2016online, katharopoulos2018not}, such approaches are often impractical for text or instruction datasets, where data clusters are inherently imbalanced, dynamically structured, and not explicitly defined.

Recent works have explored sampling strategies that prioritize informative or diverse samples based on learning utility, including importance- or curriculum-based sampling~\cite{mindermann2022prioritized, ash2020deep}, difficulty-based instruction filtering such as IFD~\citep{li2024quantity}, and representation- or diversity-based selection in embedding or gradient space~\cite{sener2018active, coleman2020selection}.
For LLM fine-tuning, related approaches include LESS, which estimates sample influence~\cite{xia2024less}, and S2L, which uses small-model training trajectories to select data for larger models~\cite{yang2024smalltolarge}.
Although these approaches improve sample quality, many require costly preprocessing, such as per-example gradients, influence scores, or proxy-model trajectories, which is expensive for large datasets.
Unlike these example-level approaches, \algname{} uses target-model gradient-proxy clusters for joint DP-aware worker allocation and cluster-weighted updates during fine-tuning.

\subsection{Data Parallelism (DP)}
DP is a core paradigm for scaling large neural networks across multiple GPUs. In DP, each worker processes a different mini-batch, and gradients are aggregated into a global update. Over time, several architectures, including parameter server~\citep{li2014parameter}, ZeRO~\citep{rajbhandari2020zero}, and fully sharded data parallel\,(FSDP)~\citep{zhao2023pytorch}, have improved DP efficiency in terms of scalability, memory footprint, and communication overhead.
\section{Preliminary}

\label{sec:sampling-variance}
\begin{figure}[t]
    \centering
    \includegraphics[width=0.49\textwidth, trim={0 3 0 0}, clip]{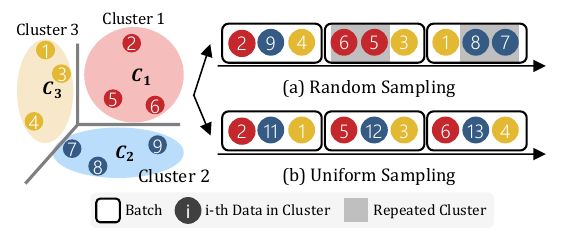}
    \caption{
    Comparison of random and uniform sampling under an idealized balanced-cluster setting.
    (a)~Random sampling frequently produces repeated semantic clusters across batches, while (b)~uniform sampling encourages balanced cluster coverage.
    }
    \label{fig:preliminary}
    \vspace{-0.5cm}
\end{figure}

Random sampling is widely used for mini-batch construction because it provides an unbiased gradient estimate under the independent and identically distributed\,(i.i.d.) assumption.
However, when the data distribution is heterogeneous or imbalanced, random sampling may over-sample dominant regions while under-sampling minority ones, resulting in high variance of mini-batch gradients.
As shown in Figure~\ref{fig:preliminary}, random sampling produces batches with markedly different data compositions, whereas uniform sampling maintains
consistent cluster distributions.
To isolate the intrinsic difference between sampling strategies, we begin by analyzing an idealized balanced scenario in which the dataset is evenly partitioned into homogeneous clusters; the comparison between the two strategies reveals the fundamental effect of sampling uniformity on gradient variance and stability.

Since the convergence rate of stochastic optimization is closely tied to the variance of gradient estimates~\citep{bottou2018optimization, gower2019sgd}, reducing the variance of mini-batch gradients directly contributes to faster and more stable convergence. Therefore, understanding how different sampling strategies influence gradient variance is critical to improving training efficiency.

\paragraph{Setup.}
We consider a dataset partitioned into $K$ clusters 
$\{C_c\}_{c=1}^K$, each containing $n$ samples so that the total dataset 
size is $N = Kn$. The gradient of each sample $i$ is denoted by 
$g_i \in \mathbb{R}^d$.
A summary of the notation used in the analysis is provided in Appendix~\ref{app:notation}.

\paragraph{Variance Decomposition.}
The total gradient variability can be decomposed into
\emph{intra-cluster} and \emph{inter-cluster} components.
Specifically, the average intra-cluster gradient variance across all clusters is defined as
\begin{equation}
    S^2_{\mathrm{intra}} = \frac{1}{K} \sum_{c=1}^K S_c^2,
\end{equation}
and the inter-cluster gradient variance, which measures the dispersion
of cluster-wise mean gradients around the population mean gradient, is
\begin{equation}
    % S^2_{\mathrm{inter}} = \frac{1}{K} \sum_{c=1}^K \|\mu_c - \mu\|^2.
    S_{\text{inter}}^2=\tfrac{1}{K}\sum_{c=1}^{K}\|\mu_c-\mu\|_2^2.
\end{equation}
A detailed derivation of this decomposition is provided in Appendix~\ref{app:prop1}.
% See Appendix~\ref{app:prop1} for detailed decomposition.
% The detailed decomposition is provided in Appendix~\ref{app:prop1}.

\paragraph{Mini-batch Gradient Estimator.}
For a batch of size $B$, the empirical gradient estimator is
\begin{equation}
    \hat{G} = \frac{1}{B} \sum_{i \in \mathrm{batch}} g_i,
\end{equation}
with expectation $\mathbb{E}[\hat{G}] = \mu$. 
We denote the sampling ratio as $f = B/N$.

\begin{proposition}[Variance of random sampling]
\label{prop:srs}
Under random sampling (without replacement),
\begin{align}
\mathrm{Var}(\hat G_{\mathrm{Random}})
&\simeq \tfrac{1-f}{B}\,S^2 \label{eq:srs-base}\\[-2pt]
&= \tfrac{1-f}{B}\big(S_{\text{intra}}^2+S_{\text{inter}}^2\big), \label{eq:srs-split}
\end{align}
where $S^2$ is the finite-population variance.
\end{proposition}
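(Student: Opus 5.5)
The plan is to compute the covariance structure of simple random sampling without replacement directly, and then split the finite-population variance $S^2$ using the law of total variance over the $K$ equal-size clusters.

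\textbf{Step 1 (covariance of indicators).} Write $\hat G = \frac1B\sum_{i=1}^N Z_i g_i$, where $Z_i\in\{0,1\}$ indicates that sample $i$ is in the batch. Under sampling without replacement:
\begin{itemize}
\item $\mathbb{E}[Z_i]=B/N=f$;
\item $\mathrm{Var}(Z_i)=f(1-f)$;
\item $\mathrm{Cov}(Z_i,Z_j)=-\frac{f(1-f)}{N-1}$ for $i\neq j$.
\end{itemize}
The last item follows from $\Pr(Z_i=Z_j=1)=\frac{B(B-1)}{N(N-1)}$. Interpret $\mathrm{Var}$ of a vector as the trace of its covariance, $\mathbb{E}\|\hat G-\mu\|_2^2$. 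Since $\sum_i (g_i-\mu)=0$, we may replace $g_i$ by the centered $d_i=g_i-\mu$. This gives
\[
\mathrm{Var}(\hat G)=\tfrac{1}{B^2}\Big[f(1-f)\textstyle\sum_i\|d_i\|^2-\tfrac{f(1-f)}{N-1}\sum_{i\neq j}d_i^\top d_j\Big].
\]
Using $\sum_{i\neq j}d_i^\top d_j=\|\sum_i d_i\|^2-\sum_i\|d_i\|^2=-\sum_i\|d_i\|^2$, the bracket becomes $f(1-f)\frac{N}{N-1}\sum_i\|d_i\|^2$. Hence
\[
\mathrm{Var}(\hat G)=\tfrac{1-f}{B}\,\tfrac{1}{N-1}\textstyle\sum_i\|g_i-\mu\|^2=\tfrac{1-f}{B}S^2 ,
\]
with $S^2$ the usual finite-population variance (denominator $N-1$). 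This is exact, so \eqref{eq:srs-base} holds with equality for that convention.

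\textbf{Step 2 (decomposition).} Expand the total sum of squares around the cluster means $\mu_c$. The cross terms $\sum_{i\in C_c}(g_i-\mu_c)^\top(\mu_c-\mu)$ vanish, which gives
\[
\textstyle\sum_i\|g_i-\mu\|^2=\sum_c\sum_{i\in C_c}\|g_i-\mu_c\|^2+n\sum_c\|\mu_c-\mu\|^2 .
\]
Define $S_c^2=\frac1n\sum_{i\in C_c}\|g_i-\mu_c\|^2$. Dividing by $N=Kn$ gives exactly $S_{\text{intra}}^2+S_{\text{inter}}^2$. The only discrepancy is the normalization: $S^2$ uses $N-1$ rather than $N$, and $S_c^2$ may use $n-1$. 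This mismatch is why the proposition is stated with ``$\simeq$''. I will state that the equality in \eqref{eq:srs-split} holds exactly when all variances use population ($1/N$, $1/n$) normalization. With $N-1$ conventions the two sides differ by a factor $N/(N-1)=1+O(1/N)$, which is negligible for large $N$.

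\textbf{Main obstacle.} The hardest part is bookkeeping rather than ideas. The negative covariance term must be handled correctly to produce the finite-population correction $1-f$, and the normalization conventions must be fixed consistently so that the ``$\simeq$'' is clearly attributable only to $N/(N-1)$ factors.
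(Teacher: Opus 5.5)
Your proposal is correct and follows essentially the same route as the paper. Both arguments use indicator variables with $\mathrm{Cov}(Z_i,Z_j)=-f(1-f)/(N-1)$, collapse the double sum to $\frac{1-f}{B}\frac{1}{N-1}\sum_i\|g_i-\mu\|^2$, and then apply the within/between sum-of-squares split for equal-size clusters.

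The differences are cosmetic. You handle the vector case via the trace, whereas the paper writes $g_i^2$ loosely. You also put the $N/(N-1)$ slack into the normalization of $S^2$, while the paper puts it into the first ``$\simeq$'' by using $S^2=\frac{1}{N}\sum_i\|g_i-\mu\|^2$.

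One small fix: centering $g_i\mapsto g_i-\mu$ is justified because $\sum_i Z_i=B$ is deterministic, not because $\sum_i(g_i-\mu)=0$.
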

\vspace*{-0.2cm}
\begin{proof}
See Appendix~\ref{app:proofsrs}.
\end{proof}

\begin{proposition}[Variance of uniform sampling]
\label{prop:strat}
Sampling $b=B/K$ per cluster and averaging with $W_c=N_c/N=1/K$ yields
\begin{align}
\mathrm{Var}(\hat G_{\mathrm{strat}})
&= \sum_{c=1}^K \tfrac{W_c^2}{b}\,(1\!-\!f_c)\,S_c^2 \label{eq:strat-gen}\\[-2pt]
% &= \tfrac{1-f}{B}\cdot \tfrac{1}{K}\sum_{c=1}^K S_c^2\\[-2pt]
&= \tfrac{1-f}{B}\,S_{\text{intra}}^2, \label{eq:strat-final}
\end{align}
with $f_c=b/N_c=f$.
\end{proposition}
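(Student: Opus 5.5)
The plan is to write the stratified estimator as a weighted sum of independent per-cluster sample means, compute the variance of each term with the finite-population formula for simple random sampling without replacement, and then specialize to equal cluster sizes. Concretely,
\[
\hat G_{\mathrm{strat}}=\sum_{c=1}^K W_c\,\bar g_c,\qquad \bar g_c=\tfrac1b\sum_{i\in \mathcal B_c} g_i ,
\]
where $\mathcal B_c$ is the set of $b$ samples drawn without replacement from $C_c$, independently across clusters. Unbiasedness follows immediately from $\mathbb E[\bar g_c]=\mu_c$ and $\sum_c W_c\mu_c=\mu$.

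First, because the draws in different clusters are independent, all cross-covariances $\mathrm{Cov}(\bar g_c,\bar g_{c'})$ vanish for $c\neq c'$. Hence the variance, understood as the trace of the covariance, i.e.\ $\mathbb E\|\hat G-\mu\|_2^2$, the same convention as in Proposition~\ref{prop:srs}, is $\sum_c W_c^2\,\mathrm{Var}(\bar g_c)$.

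Second, each cluster is its own finite population. I apply the same finite-population result used for Proposition~\ref{prop:srs}, namely that a without-replacement sample mean has variance $\frac{1-f_c}{b}S_c^2$. Here $S_c^2$ is the within-cluster finite-population variance with the $\frac{1}{N_c-1}$ normalization, matching Proposition~\ref{prop:srs}. This gives \eqref{eq:strat-gen} exactly, with no approximation.

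The main step to verify is that one-cluster finite-population identity. The route is to expand $\|\bar g_c-\mu_c\|^2$ as a double sum over $i,j\in\mathcal B_c$. The diagonal terms have expectation $\frac{N_c-1}{N_c}S_c^2$. For the off-diagonal terms, I use the fact that deviations within a cluster sum to zero, which forces $\mathbb E\langle g_i-\mu_c,g_j-\mu_c\rangle=-S_c^2/N_c$ for $i\neq j$. Collecting terms yields $\frac{1}{b}\bigl(1-\frac{b}{N_c}\bigr)S_c^2$. If the appendix derivation for Proposition~\ref{prop:srs} already contains this lemma, I simply cite it.

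Finally, substitute $W_c=1/K$, $N_c=n$, $b=B/K$, and $f_c=b/n=B/(Kn)=f$:
\[
\sum_c \frac{1}{K^2}\cdot\frac{K}{B}(1-f)S_c^2=\frac{1-f}{B}\cdot\frac1K\sum_c S_c^2=\frac{1-f}{B}S_{\mathrm{intra}}^2,
\]
which is \eqref{eq:strat-final}.

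There is one normalization subtlety. If $S_{\mathrm{intra}}^2$ and $S_{\mathrm{inter}}^2$ in the decomposition use $\frac1{N_c}$ rather than $\frac1{N_c-1}$, the constants change by a factor of $\frac{n}{n-1}$. I would state the convention explicitly so that the result matches the $\simeq$ in Proposition~\ref{prop:srs}. Under that convention, the comparison with \eqref{eq:srs-split} shows that stratification removes exactly the $S_{\mathrm{inter}}^2$ term.
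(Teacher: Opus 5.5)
Your proposal is correct and follows the same route as the paper's proof. Independence across clusters gives $\mathrm{Var}(\hat G_{\mathrm{strat}})=\sum_c W_c^2\,\mathrm{Var}(\bar g_c)$, the without-replacement formula $\mathrm{Var}(\bar g_c)=(1-f_c)S_c^2/b$ handles each cluster, and substituting $W_c=1/K$, $b=B/K$ yields $\frac{1-f}{B}S_{\text{intra}}^2$. The paper only asserts the per-cluster formula, so your derivation of it is an addition. Your normalization remark is also right: the formula is exact only with the $\frac{1}{N_c-1}$ convention, while the paper's decomposition implicitly uses $\frac{1}{N_c}$, so its ``$=$'' is really an approximation up to a factor $\frac{n}{n-1}$.
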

\vspace*{-0.2cm}
\begin{proof}
See Appendix~\ref{app:proofstrat}.
\end{proof}

\begin{figure*}[t]
    \centering
    \includegraphics[width=\textwidth, trim={0 38 0 0}, clip]{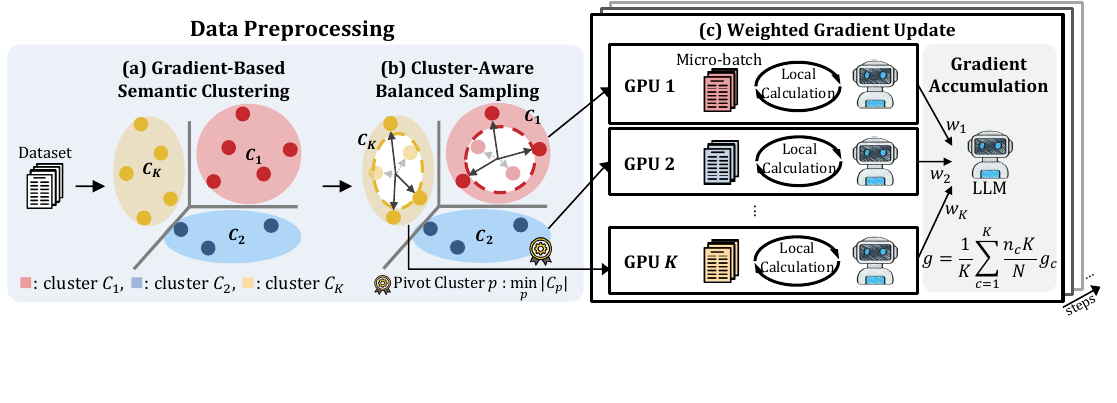}
    \caption{Overview of the proposed \algname{} framework.
    It consists of three stages: gradient-based clustering, cluster-aware balanced sampling, and weighted gradient update, which collectively ensure balanced data allocation.}
    \label{fig:method}
    \vspace*{-0.3cm}
\end{figure*}

\begin{proposition}[Variance comparison]
\label{prop:comparison}
Combining Propositions~\ref{prop:srs} and \ref{prop:strat}, if
$S_{\text{inter}}^2>0$, then
$\mathrm{Var}(\hat G_{\mathrm{strat}})<\mathrm{Var}(\hat G_{\mathrm{Random}})$,
with equality if and only if $S_{\text{inter}}^2=0$.
\end{proposition}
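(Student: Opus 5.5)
The plan is to take the two closed forms from Propositions~\ref{prop:srs} and \ref{prop:strat} and subtract them. By \eqref{eq:srs-split} and \eqref{eq:strat-final},
\begin{equation*}
\mathrm{Var}(\hat G_{\mathrm{Random}})-\mathrm{Var}(\hat G_{\mathrm{strat}})
\simeq \tfrac{1-f}{B}\big(S_{\text{intra}}^2+S_{\text{inter}}^2\big)-\tfrac{1-f}{B}S_{\text{intra}}^2
=\tfrac{1-f}{B}\,S_{\text{inter}}^2 .
\end{equation*}
The claim then reduces to a sign question. Since $B\ge 1$ and $f=B/N$, the factor $\tfrac{1-f}{B}$ is nonnegative. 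It is strictly positive whenever $B<N$, which is the only nontrivial regime: if $B=N$, both estimators equal $\mu$ deterministically. Also $S_{\text{inter}}^2=\tfrac1K\sum_c\|\mu_c-\mu\|_2^2\ge 0$, and it vanishes iff every $\mu_c=\mu$. Hence the difference is positive exactly when $S_{\text{inter}}^2>0$, and zero iff $S_{\text{inter}}^2=0$. I will state the standing assumption $B<N$ (i.e.\ $f<1$) explicitly.

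The main obstacle is that \eqref{eq:srs-base} is only an approximation ($\simeq$). It comes from approximating the finite-population variance with divisor $N-1$ by one with divisor $N$, and from splitting $S^2$ into intra and inter parts with matching normalizations. For a rigorous strict inequality I would redo the computation exactly.

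For simple random sampling without replacement, the exact formula is $\mathrm{Var}(\hat G)=\tfrac{1-f}{B}S^2$ with $S^2=\tfrac{1}{N-1}\sum_i\|g_i-\mu\|^2$. The exact ANOVA identity is
\[
\sum_i\|g_i-\mu\|^2=\sum_c\sum_{i\in C_c}\|g_i-\mu_c\|^2+n\sum_c\|\mu_c-\mu\|^2 .
\]
With $S_c^2=\tfrac{1}{n-1}\sum_{i\in C_c}\|g_i-\mu_c\|^2$, this gives
\[
(N-1)S^2=K(n-1)S_{\text{intra}}^2+nK\,S_{\text{inter}}^2 .
\]
The exact stratified variance is $\tfrac{1-f}{B}S_{\text{intra}}^2$. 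The difference is therefore
\[
\tfrac{1-f}{B}\Big[\tfrac{nK}{N-1}S_{\text{inter}}^2-\tfrac{K-1}{N-1}S_{\text{intra}}^2\Big],
\]
which is no longer purely $S_{\text{inter}}^2$.

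So in the exact finite-population setting, strict dominance needs $nS_{\text{inter}}^2>\tfrac{K-1}{K}S_{\text{intra}}^2$, which is essentially automatic for large $n$. I will therefore present the proposition in the paper's stated approximation regime ($N$ and $n$ large, where $\tfrac{N-1}{N}\approx 1$ and these corrections vanish) and note this caveat in a remark. Within that regime the subtraction above settles both the strict inequality and the equality case.
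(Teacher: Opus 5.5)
Your proposal is correct and matches the paper's proof: you subtract the closed forms of Propositions~\ref{prop:srs} and \ref{prop:strat} to get $\tfrac{1-f}{B}S_{\text{inter}}^2$ and read off its sign, and you state explicitly the assumption $B<N$ (so that $\tfrac{1-f}{B}>0$), which the paper's proof of this proposition uses only implicitly. Your exact finite-population check is also right and goes beyond the paper: with $S_c^2$ normalized by $n-1$, the exact difference is $\tfrac{1-f}{B(N-1)}\big[nK\,S_{\text{inter}}^2-(K-1)S_{\text{intra}}^2\big]$ (Cochran's classical comparison), so the strict inequality and the ``iff'' equality case hold only in the large-$N$ approximation regime, and the paper's appendix silently treats that approximation as an exact equality.
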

\vspace*{-0.2cm}
\begin{proof}
{See Appendix~\ref{app:prop3}}
\end{proof}

From Propositions~\ref{prop:srs} and \ref{prop:strat}, under the idealized balanced-cluster setting, uniform sampling reduces gradient variance by eliminating the inter-cluster variance component.
However, real-world datasets generally exhibit non-uniform cluster sizes, and the balanced analysis does not directly characterize the practical setting of \algname{}.
We therefore extend the analysis to weighted stratified sampling over non-uniform clusters and explicitly connect the resulting estimator to the cluster-weighted aggregation used in \algname{}.

\paragraph{Extension to Non-Uniform Clusters.}
For non-uniform clusters, let $\pi_c=|C_c|/N$ denote the original proportion of cluster $c$, and let $\hat{\mu}_c$ be the mean gradient of $m_c$ samples drawn from that cluster.
The weighted stratified estimator
\begin{equation}
    \hat{G}_{\mathrm{wstrat}}
    = \sum_{c=1}^{K}\pi_c\hat{\mu}_c
\end{equation}
remains unbiased under uniform within-cluster sampling. Unlike the balanced case, this formulation does not guarantee lower total variance than random sampling; rather, it removes variance due to random fluctuations in cluster composition.

For \algname{}, $m_c=B/K$ under the default allocation and
$w_c=K\pi_c$. Thus,
\begin{equation}
    \frac{1}{K}\sum_{c=1}^{K} w_c\hat{\mu}_c
    = \sum_{c=1}^{K}\pi_c\hat{\mu}_c,
\end{equation}
which corresponds to the weighted stratified estimator above.
This formulation provides the theoretical motivation for the balanced cluster allocation and weighted aggregation used in \algname{} under practical non-uniform cluster distributions.

% From Propositions~\ref{prop:srs} and \ref{prop:strat}, under the balanced assumption, uniform sampling reduces gradient variance by eliminating the inter-cluster variance component. However, real-world datasets are typically highly skewed across clusters.
% To address this, we design a cluster-aware sampling strategy that approximates the variance-reduction effect of uniform sampling by adaptively balancing the data distribution before sampling.

\section{\algname{} Framework}
We propose \algname{}, which balances data selection between clusters to mitigate overfitting while preserving the intrinsic data distribution. 
% \algname{} follows a three-stage process comprising gradient-based semantic clustering, cluster-aware balanced sampling, and weighted gradient update.
As shown in Figure~\ref{fig:method}, the method consists of three stages: gradient-based semantic clustering, cluster-aware balanced sampling, and weighted gradient update.
\subsection{Gradient-Based Semantic Clustering}
Inspired by BADGE~\citep{ash2020deep}, we construct per-sample \emph{gradient proxy embeddings} that capture both semantic information and model uncertainty. 
Computing gradients with respect to all model parameters is expensive for large language models, so we instead derive a lightweight proxy from the token-level cross-entropy loss and the final-layer hidden representations.
For a sequence $x$, let $h_t \in \mathbb{R}^{H}$ denote the final hidden state at token position $t$, and $p_{t,y_t}$ the teacher-forced probability of the ground-truth token $y_t$. 
We define a token-level gradient proxy as $g_t = (p_{t,y_t}-1)h_t$. 
The sequence-level embedding is obtained by averaging over valid token positions,
$g(x)=\frac{1}{|\mathcal{T}(x)|}\sum_{t\in\mathcal{T}(x)}(p_{t,y_t}-1)h_t$,
followed by $\ell_2$ normalization.
As the hidden states $h_t$ encode the semantic context of the sequence, the resulting proxy embedding inherits semantic structure from the representation space. 
The coefficient $(p_{t,y_t}-1)$ further weights tokens according to prediction confidence, allowing the embedding to capture both semantic content and uncertainty.

Finally, we cluster the normalized embeddings using $K$-means~\citep{lloyd1982least}, as illustrated in Figure~\ref{fig:method}(a).
Samples that induce similar update directions cluster together, yielding groups that reflect coherent instructional intents and reasoning patterns.
% The number of clusters $k$ is set equal to the number of GPUs to enable GPU-level data allocation that preserves intra-cluster consistency.
The number of clusters, $K$, is automatically set equal to the number of GPUs to enable direct cluster-to-GPU assignment during training that preserves intra-cluster consistency.
To prevent severe cluster imbalance, we apply capacity-constrained $K$-means with $|\mathcal{C}_c| \le \frac{\alpha|D|}{K}$, where $\alpha$ is a user-defined hyperparameter that controls the upper bound on the cluster capacity.

% \smallskip
% \noindent \textbf{Step 2: Cluster-Aware Balanced Sampling.}
\subsection{Cluster-Aware Balanced Sampling}
\label{sec:balanced_sampling}
% \begin{figure*}[t]
%     \centering
%     \includegraphics[width=0.8\textwidth]{figures/figure_3_selection.pdf}
%     \caption{Conceptual illustration of expected pairwise cosine similarities under 
%     different within-cluster selection strategies.
%     (a) Random selection draws points according to the underlying cluster distribution, resulting in an intermediate level of angular variability.
%     (b) Core-centric selection restricts samples to a small, dense region near the centroid, producing large cosine similarities and highly aligned gradient directions.
%     (c) Peripheral selection focuses on boundary points that span a wide angular region, yielding the smallest cosine similarities and the most diverse gradient directions. 
%     }
%     \label{fig:cos}
% \end{figure*}
%
% We construct a reduced dataset by balancing the sample budget across clusters\,(inter-level coverage) and selecting informative, non-redundant samples within each cluster\,(intra-level coverage). We implement inter-cluster balancing with a pivot-cluster budget and perform intra-cluster selection using distance-to-centroid ranking.
We construct a reduced dataset by balancing the sample budget evenly across clusters (inter-level coverage) and selecting non-redundant samples within each cluster (intra-level coverage).
For inter-cluster coverage, we use the smallest cluster as the pivot and down-sample larger clusters to match the size of the pivot cluster using distance-to-centroid ranking.
% Within each cluster, samples are ranked by their distance to the cluster centroid, and peripheral samples are selected to promote gradient diversity while reducing redundancy.
% After clustering, we balance the number of samples across clusters to stabilize training. We first identify the smallest cluster and down-sample the others to match its size, yielding a uniformly balanced dataset. To analyze the optimization effect of within-cluster selection, 
% We examine its influence on gradient diversity~\citep{yin2018gradient}.
% The gradient diversity of a mini-batch $\mathcal{S}$ is defined as
We examine how the intra-cluster selection strategy affects gradient diversity~\citep{yin2018gradient}.
The gradient diversity of a mini-batch $\mathcal{S}$ is defined as
\begin{equation}
\label{eq:diversity}
\begin{aligned}
\Delta(\mathcal{S})
&= \frac{\sum_{i\in\mathcal{S}}\|g_i\|^2}
{\left\|\sum_{i\in\mathcal{S}} g_i\right\|^2} \\
&= \frac{\sum_{i}\|g_i\|^2}
{\sum_i \|g_i\|^2
+ 2 \sum_{i<j} \|g_i\|\,\|g_j\| \cos\theta_{ij}}
\end{aligned}
\end{equation}
which decreases as pairwise cosine similarities increase.
Thus, the expected value of $\cos\theta_{ij}$ directly determines
the diversity level of a mini-batch.

% Within a cluster $\mathcal{C}_c$, we decompose each gradient as $g_i = \bar{\mu}_c + \varepsilon_i$, where $\bar{\mu}_c$ is the cluster-level mean direction and $\varepsilon_i$ captures within-cluster residual variation. Let $v=\bar{\mu}_c/\|\bar{\mu}_c\|$, and decompose the residual into the component parallel to $v$ and the orthogonal component: \begin{equation} \varepsilon_i = a_i v + z_i, \qquad z_i = (I-vv^\top)\varepsilon_i . \end{equation} The orthogonal residual $z_i$ captures the directional deviation of sample $i$ from the cluster-level gradient direction. For a selection strategy $\pi$, let $S_\pi$ denote the selected subset and define \begin{equation} \kappa_\pi = \mathbb{E}\left[\|z_i\|^2 \mid i\in S_\pi\right]. \end{equation} A second-order expansion of the normalized gradient $u_i=g_i/\|g_i\|$ gives \begin{equation} \label{eq:selected-cosine} \mathbb{E}\left[\cos\theta_{ij}\mid i,j\in S_\pi\right] \approx 1 - \frac{\kappa_\pi}{\|\bar{\mu}_c\|^2} + R_\pi , \end{equation} where \begin{equation} R_\pi = \frac{1}{\|\bar{\mu}_c\|^2} \mathbb{E}\left[ \langle z_i,z_j\rangle \mid i,j\in S_\pi \right] \end{equation} is a residual correlation term. When the selected residual directions are approximately centered and weakly correlated, $R_\pi$ is small. In this case, selection strategies that preserve larger orthogonal residual magnitude $\kappa_\pi$ yield lower expected cosine similarity and thus higher gradient diversity. A complete derivation and discussion of the required assumptions are provided in Appendix~\ref{appendix:prune_strategy}.

Within a cluster $\mathcal{C}_c$, we decompose gradients as
$g_i = \bar{\mu}_c + \varepsilon_i$,
where $\bar{\mu}_c$ is the cluster mean and $\varepsilon_i$
captures residual variation.
Under a first-order approximation of normalized gradients,
the expected cosine similarity between gradients satisfies
\begin{equation}
\mathbb{E}[\cos\theta_{ij}]
\approx
1 - \frac{(d-1)\sigma^2}{\|\bar{\mu}_c\|^2},
\end{equation}
which decreases monotonically with the residual variance $\sigma^2$.
A complete derivation of this approximation is provided in Appendix~\ref{appendix:prune_strategy}.

\begin{figure}[t]
    \centering
    \includegraphics[width=0.49\textwidth, trim={0 3 0 0}, clip]{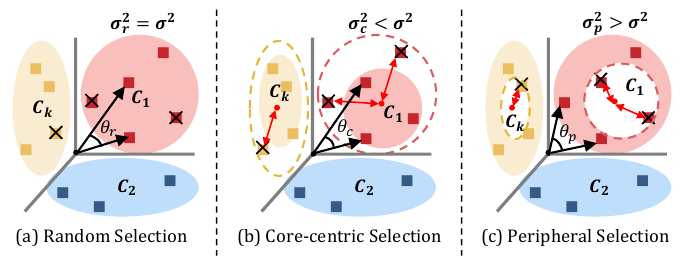}
    \caption{
    % Expected pairwise cosine similarities under different within-cluster selection strategies.
    Expected pairwise cosine similarity between gradient embeddings under within-cluster selection strategies:
    (a)~random sampling, (b)~core-centric sampling, and (c)~peripheral sampling.
    % (a) Random selection draws points according to the underlying cluster distribution, resulting in an intermediate level of angular variability.
    % (b) Core-centric selection restricts samples to a small, dense region near the centroid, producing large cosine similarities and highly aligned gradient directions.
    % (c) Peripheral selection focuses on boundary points that span a wide angular region, yielding the smallest cosine similarities and the most diverse gradient directions. 
    }
    \label{fig:cos}
    \vspace{-0.5cm}
\end{figure}

We consider three within-cluster selection strategies:
\textit{core-centric} (nearest to centroid), \textit{random}, and
\textit{peripheral} (farthest from centroid), as illustrated in Figure~\ref{fig:cos}.
Since distance-based conditioning induces
$\sigma_{\text{core}}^2<\sigma_{\text{rand}}^2<\sigma_{\text{peri}}^2$,
it also implies
$\Delta_{\text{core}}<\Delta_{\text{rand}}<\Delta_{\text{peri}}$
by Equation~\eqref{eq:diversity}.
As shown in Figure~\ref{fig:method}(b), we therefore adopt peripheral selection as the default strategy.
To construct the final training subset, we first down-sample all clusters to the smallest cluster size $n_{\min}$, yielding a uniformly balanced intermediate dataset.
Given a selection ratio $r \in (0,1]$, we retain only the top $r \cdot n_{\min}$ farthest samples from the centroid within each balanced cluster.
The resulting balanced subsets are then distributed across GPUs to encourage diverse and balanced gradient contributions.

\subsection{Weighted Gradient Update}
To ensure consistency with full-dataset fine-tuning, we introduce a weighting coefficient that aligns the expected gradient contribution of each cluster with that under full-dataset random sampling, as shown in Figure~\ref{fig:method}(c).
The derived coefficient for the \(c\)-th cluster is expressed as
$w_c = \frac{|\mathcal{C}_c| \, K}{N}$.
While balancing cluster sizes is effective for reducing sampling bias and stabilizing gradient estimation, it inevitably alters the original data distribution. 
To compensate for this shift, we assign each cluster a weight \(w_c\) proportional to its original size \(|\mathcal{C}_c|\), \emph{restoring its contribution} to the global gradient, maintaining consistency with the original unbalanced dataset distribution  while preserving the variance-reduction benefits of balanced sampling.
% The full derivation is provided in Appendix~\ref{appendix:weight-derivation}.
The full derivation is provided in Appendix~\ref{appendix:weight-derivation}, and the overall procedure of \algname{} is summarized in Appendix~\ref{appendix:alg}.

\smallskip
\noindent \textbf{Hyperparameters.}
\algname{} uses three hyperparameters: the number of clusters $K$, cluster capacity $\alpha$, and selection ratio $r$. By default, $K$ is set to the number of data-parallel workers, providing a simple granularity for cluster-balanced worker assignment. We set $\alpha=1.5$ and vary $r$ to control the final data ratio.
% \begin{table}[t]
% \centering
% \caption{
% Comparison of selection-time complexity. $D$ denotes the dataset size.
% \textit{Forward} and \textit{Backward} denote a single forward pass and backward pass of the transformer model, respectively.
% For a transformer model with $L$ layers, sequence length $T$, and hidden dimension $H$, a forward pass costs $O(L(T^2H + TH^2))$; a backward pass is approximately $2\times$ more expensive~\citep{vaswani2017attention, goodfellow2016deep}.
% }
% \label{tab:selection_complexity}
% \resizebox{0.75\columnwidth}{!}{
% \begin{tabular}{lc}
% \toprule
% Method & Compute Complexity \\
% \midrule
% IFD & $O(|D|\cdot \textit{Forward} \times 2)$ \\
% LESS & $O(|D|\cdot (\textit{Forward} + \textit{Backward}))$ \\
% S2L & $O(|D|\cdot (\textit{Forward} + \textit{Backward}))$ \\
% \algname{} & $O(|D|\cdot \textit{Forward})$ \\
% \bottomrule
% \end{tabular}
% }
% \vspace{-5mm}
% \end{table}

\begin{table}[t]
\centering
\caption{
Comparison of selection-time complexity. $D$ denotes the dataset size.
\textit{Forward} and \textit{Backward} denote a forward and backward pass of the target transformer model, respectively.
$\textit{Forward}_s$ and $\textit{Backward}_s$ denote those of the small model, and $E_s$ denotes the number of proxy-model training epochs.
For a transformer model with $L$ layers, sequence length $T$, and hidden dimension $H$, a forward pass costs $O(L(T^2H + TH^2))$; a backward pass is approximately $2\times$ more expensive~\citep{vaswani2017attention, goodfellow2016deep}.
}
\label{tab:selection_complexity}
\resizebox{0.75\columnwidth}{!}{
\begin{tabular}{lc}
\toprule
Method & Compute Complexity \\
\midrule
IFD & $O(|D|\cdot \textit{Forward} \times 2)$ \\
LESS & $O(|D|\cdot (\textit{Forward} + \textit{Backward}))$ \\
S2L & $O(|D|\cdot E_s \cdot (\textit{Forward}_s + \textit{Backward}_s))$ \\
\algname{} & $O(|D|\cdot \textit{Forward})$ \\
\bottomrule
\end{tabular}
}
% \vspace{-0.5cm}
\end{table}

% \subsection{Data Selection Complexity Analysis}
% \label{sec:complexity}
% We compare the computational complexity of \algname{} with existing data selection methods.
% Table~\ref{tab:selection_complexity} summarizes the theoretical complexity of representative approaches, including IFD and LESS~\citep{li2024quantity, xia2024less}.
% IFD evaluates difficulty by computing two likelihood scores per sample, corresponding to two forward passes over the dataset.
% LESS requires forward and backward passes to compute parameter gradients, resulting in a cost proportional to the number of model parameters.
% In contrast, \algname{} constructs lightweight gradient proxy embeddings using only the final-layer hidden states and token-level probabilities, avoiding expensive backpropagation over model parameters. 
% This leads to substantially lower computational complexity than existing approaches.

\subsection{Data Selection Complexity Analysis}
\label{sec:complexity}
We compare the computational complexity of \algname{} with existing data selection methods.
Table~\ref{tab:selection_complexity} summarizes the theoretical complexity of representative approaches, including IFD, LESS, and S2L~\citep{li2024quantity, xia2024less, yang2024smalltolarge}.
IFD evaluates difficulty by computing two likelihood scores per sample, corresponding to two forward passes over the dataset.
LESS requires forward and backward passes to compute parameter gradients, resulting in a cost proportional to the number of model parameters.
S2L uses small-model training trajectories, introducing additional proxy-model training and trajectory-tracking costs.
% In contrast, \algname{} constructs lightweight gradient-proxy embeddings using only the final-layer hidden states and token-level probabilities, avoiding expensive backpropagation or proxy-model training.
% This leads to substantially lower computational complexity than existing approaches.
In contrast, \algname{} constructs lightweight gradient-proxy embeddings from final-layer hidden states and token-level probabilities, which yields lower computational complexity than existing approaches by avoiding expensive backpropagation or proxy-model training.

% We compare the computational complexity of \algname{} with representative data selection methods, including IFD, LESS, and S2L~\citep{li2024quantity, xia2024less, yang2024smalltolarge}, as summarized in Table~\ref{tab:selection_complexity}.
% IFD requires two forward passes to compute per-sample likelihood scores, LESS computes parameter gradients via forward and backward passes, and S2L introduces proxy-model training and trajectory-tracking costs.
% In contrast, \algname{} constructs lightweight gradient-proxy embeddings from final-layer hidden states and token-level probabilities, which yields lower computational complexity than existing approaches.

\section{Experiments}
\begin{table}[t!]
\centering

\caption{
Pass@1\,(\%) and training time\,(hrs) for sampling strategies on \textrm{CodeLlama-7B} fine-tuned with \textrm{Magicoder-OSS-Instruct-75K}.
$r=\{1.00,0.75,0.50\}$ corresponds to 79\%, 59\%, and 40\% data usage after cluster balancing.
\textbf{Bold} marks the best score per usage.
}
\label{tab:main_magi}
\resizebox{\linewidth}{!}{
\setlength{\tabcolsep}{3pt}
\begin{tabular}{ll ccc c}
\toprule
\multicolumn{2}{c}{Data Selection} & \multicolumn{3}{c}{Evaluation Benchmark} & \multirow{2}{*}{\textbf{Time}} \\
\cmidrule(lr){1-2} \cmidrule(lr){3-5}
\textbf{Ratio} & \textbf{Sampling} & \textbf{HumanEval(+)} & \textbf{MBPP(+)} & \textbf{Avg.} & \\
\midrule
\multirow{2}{*}{\makecell[l]{Full}} 
  & Random  & 57.1{\scriptsize $\pm$ 1.5} (52.4{\scriptsize $\pm$ 2.7}) & 66.1{\scriptsize $\pm$ 0.6} (54.8{\scriptsize $\pm$ 0.6}) & 57.6 & 1.13 \\
  & Uniform & 54.1{\scriptsize $\pm$ 2.2} (50.0{\scriptsize $\pm$ 2.4}) & 65.0{\scriptsize $\pm$ 1.3} (54.3{\scriptsize $\pm$ 1.5}) & 55.8 & 1.29 \\
\midrule
\multirow{6}{*}{\makecell[l]{1.00\\(79\%)}} %No redundancy
  & Random  & \textbf{54.9{\scriptsize $\pm$ 1.6}} (49.4{\scriptsize $\pm$ 1.8}) & 65.9{\scriptsize $\pm$ 1.1} (54.7{\scriptsize $\pm$ 0.9}) & 56.2 & 0.90 \\
  & Uniform & 53.3{\scriptsize $\pm$ 2.3} (49.0{\scriptsize $\pm$ 1.7}) & 65.1{\scriptsize $\pm$ 0.3} (54.2{\scriptsize $\pm$ 0.7}) & 55.4 & 1.04 \\
  & IFD     & 53.8{\scriptsize $\pm$ 3.1} (49.8{\scriptsize $\pm$ 3.1}) & 65.3{\scriptsize $\pm$ 0.4} (54.0{\scriptsize $\pm$ 1.4}) & 55.7 & 0.87 \\
  & LESS    & 53.9{\scriptsize $\pm$ 2.2} (\textbf{50.0{\scriptsize $\pm$ 1.8}}) & 65.8{\scriptsize $\pm$ 0.2} (53.6{\scriptsize $\pm$ 0.7}) & 55.8 & 0.90\\
  & S2L    & 53.0{\scriptsize $\pm$ 1.1} (48.2{\scriptsize $\pm$ 0.7}) & 66.1{\scriptsize $\pm$ 0.9} (54.8{\scriptsize $\pm$ 1.1}) & 55.5 & 0.90\\
  & \cellcolor{clustergray}\textbf{\algname{}} 
             & \cellcolor{clustergray}\textbf{54.9{\scriptsize $\pm$ 1.2}} (49.8{\scriptsize $\pm$ 1.5}) 
             & \cellcolor{clustergray}\textbf{67.0{\scriptsize $\pm$ 1.3}} (\textbf{55.6{\scriptsize $\pm$ 1.1}}) 
             & \cellcolor{clustergray}\textbf{56.8} & \cellcolor{clustergray}0.92 \\
\midrule
\multirow{6}{*}{\makecell[l]{0.75\\(59\%)}} 
  & Random  & 51.4{\scriptsize $\pm$ 0.7} (46.8{\scriptsize $\pm$ 1.0}) & \textbf{67.2{\scriptsize $\pm$ 1.1}} (55.1{\scriptsize $\pm$ 0.7}) & 55.1 & 0.67 \\
  & Uniform & 52.6{\scriptsize $\pm$ 2.5} (48.0{\scriptsize $\pm$ 2.9}) & 66.1{\scriptsize $\pm$ 1.2} (54.4{\scriptsize $\pm$ 1.0}) & 55.3 & 0.77 \\
  & IFD     & 50.2{\scriptsize $\pm$ 0.7} (43.9{\scriptsize $\pm$ 2.1}) & 66.3{\scriptsize $\pm$ 0.4} (54.4{\scriptsize $\pm$ 0.4}) & 53.7 & 0.66 \\
  & LESS    & 53.3{\scriptsize $\pm$ 1.8} (49.0{\scriptsize $\pm$ 1.7}) & 65.4{\scriptsize $\pm$ 1.2} (53.8{\scriptsize $\pm$ 0.4}) & 55.4 & 0.67 \\
  & S2L     & 51.8{\scriptsize $\pm$ 1.4} (46.3{\scriptsize $\pm$ 1.7}) & 63.2{\scriptsize $\pm$ 1.2} (53.4{\scriptsize $\pm$ 0.9}) & 53.7 & 0.66 \\
  & \cellcolor{clustergray}\textbf{\algname{}} 
             & \cellcolor{clustergray}\textbf{54.5{\scriptsize $\pm$ 0.9} (49.4{\scriptsize $\pm$ 1.0})} 
             & \cellcolor{clustergray}66.9{\scriptsize $\pm$ 0.5} (\textbf{55.3{\scriptsize $\pm$ 0.3}}) 
             & \cellcolor{clustergray}\textbf{56.5} & \cellcolor{clustergray}0.69 \\
\midrule
\multirow{6}{*}{\makecell[l]{0.50\\(40\%)}} 
  & Random  & 52.2{\scriptsize $\pm$ 1.3} (45.9{\scriptsize $\pm$ 1.0}) & 66.0{\scriptsize $\pm$ 0.8} (54.7{\scriptsize $\pm$ 0.9}) & 54.7 & 0.45 \\
  & Uniform & 51.8{\scriptsize $\pm$ 2.7} (46.7{\scriptsize $\pm$ 2.3}) & 64.9{\scriptsize $\pm$ 1.7} (53.7{\scriptsize $\pm$ 1.1}) & 54.3 & 0.53 \\
  & IFD     & 50.6{\scriptsize $\pm$ 0.0} (44.7{\scriptsize $\pm$ 0.3}) & 64.5{\scriptsize $\pm$ 1.2} (53.8{\scriptsize $\pm$ 1.0}) & 53.4 & 0.44 \\
  & LESS    & 53.7{\scriptsize $\pm$ 1.1} (48.2{\scriptsize $\pm$ 1.2}) & 66.2{\scriptsize $\pm$ 1.2} (54.6{\scriptsize $\pm$ 2.2}) & 55.7 & 0.45 \\
  & S2L    & 52.4{\scriptsize $\pm$ 0.5} (45.7{\scriptsize $\pm$ 0.9}) & \textbf{66.7{\scriptsize $\pm$ 0.7}} (54.8{\scriptsize $\pm$ 1.3}) & 54.9 & 0.46 \\
  & \cellcolor{clustergray}\textbf{\algname{}} 
             & \cellcolor{clustergray}\textbf{54.0{\scriptsize $\pm$ 1.8} (49.0{\scriptsize $\pm$ 2.5})} 
             & \cellcolor{clustergray}65.8{\scriptsize $\pm$ 1.1} (\textbf{55.0{\scriptsize $\pm$ 0.9}}) 
             & \cellcolor{clustergray}\textbf{55.9} & \cellcolor{clustergray}0.47 \\
\bottomrule
\end{tabular}}

\vspace{0.7em}
\caption{
Accuracy\,(\%) on medical benchmarks and training time\,(hrs) for sampling strategies on Llama-2-7B fine-tuned with MedInstruct-52K. 
The data ratios 63\%, 47\%, and 31\% correspond to $r = 1.00$, $0.75$, and $0.50$.
}
\label{tab:main_med}
\resizebox{\linewidth}{!}{
\setlength{\tabcolsep}{3pt}
\begin{tabular}{ll ccccc c}
\toprule
\multicolumn{2}{c}{Data Selection} & \multicolumn{5}{c}{Evaluation Benchmark} & \multirow{2}{*}{\textbf{Time}} \\
\cmidrule(lr){1-2} \cmidrule(lr){3-7}
\textbf{Ratio} & \textbf{Sampling} & \textbf{MedMCQA} & \textbf{MedQA} & \textbf{MMLU} & \textbf{PubMedQA} & \textbf{Avg.} & \\
\midrule
\multirow{2}{*}{\makecell[l]{Full}}
  & Random  & 36.9 {\scriptsize $\pm$ 1.4} & 35.6 {\scriptsize $\pm$ 1.1} & 50.4 {\scriptsize $\pm$ 0.8} & 71.2 {\scriptsize $\pm$ 0.2} & 48.5 & 0.99 \\
  & Uniform & 37.4 {\scriptsize $\pm$ 0.4} & 36.6 {\scriptsize $\pm$ 0.2} & 50.3 {\scriptsize $\pm$ 1.4} & 69.0 {\scriptsize $\pm$ 1.0} & 48.3 & 1.02 \\
\midrule
\multirow{6}{*}{\makecell[l]{1.00\\(63\%)}} 
  & Random  & 36.7 {\scriptsize $\pm$ 0.6} & 35.3 {\scriptsize $\pm$ 1.8} & 48.9 {\scriptsize $\pm$ 0.4} & 70.1 {\scriptsize $\pm$ 0.9} & 47.7 & 0.59 \\
  & Uniform & 37.2 {\scriptsize $\pm$ 0.4} &  \textbf{36.3 {\scriptsize $\pm$ 1.2}} &  \textbf{49.8 {\scriptsize $\pm$ 1.3}} &  69.8 {\scriptsize $\pm$ 0.3} & 48.3 & 0.67 \\
  & IFD     & 36.0 {\scriptsize $\pm$ 0.6} & 35.6 {\scriptsize $\pm$ 0.8} & 47.6 {\scriptsize $\pm$ 0.7} & 69.9 {\scriptsize $\pm$ 0.4} & 47.3 & 0.59 \\
  & LESS    & 36.3 {\scriptsize $\pm$ 0.3} & 33.8 {\scriptsize $\pm$ 1.2} & 49.5 {\scriptsize $\pm$ 1.0} & \textbf{72.7 {\scriptsize $\pm$ 0.2}} & 48.1 & 0.59 \\
  & S2L    &  36.9 {\scriptsize $\pm$ 0.4} &  35.0 {\scriptsize $\pm$ 1.7} & \textbf{49.8 {\scriptsize $\pm$ 1.1}} & 69.0 {\scriptsize $\pm$ 1.7} & 47.7 & 0.57 \\
  & \cellcolor{clustergray}\textbf{\algname{}} 
             & \cellcolor{clustergray}\textbf{37.5 {\scriptsize $\pm$ 0.2}} & \cellcolor{clustergray}\textbf{36.3 {\scriptsize $\pm$ 0.9}} & \cellcolor{clustergray}\textbf{49.8 {\scriptsize $\pm$ 1.2}} & \cellcolor{clustergray}70.1 {\scriptsize $\pm$ 1.3} & \cellcolor{clustergray}\textbf{48.4} & \cellcolor{clustergray}0.59 \\
\midrule
\multirow{6}{*}{\makecell[l]{0.75\\(47\%)}} 
  & Random  & 36.7 {\scriptsize $\pm$ 1.3} & 34.6 {\scriptsize $\pm$ 2.8} & 48.3 {\scriptsize $\pm$ 1.2} & 70.9 {\scriptsize $\pm$ 1.4} & 47.6 & 0.45 \\
  & Uniform & 36.4 {\scriptsize $\pm$ 0.2} & 35.3 {\scriptsize $\pm$ 0.3} & 48.5 {\scriptsize $\pm$ 0.8} & 69.7 {\scriptsize $\pm$ 0.4} & 47.5 & 0.52 \\
  & IFD     & 35.6 {\scriptsize $\pm$ 0.4} & 34.1 {\scriptsize $\pm$ 0.3} & 44.6 {\scriptsize $\pm$ 0.5} & 66.3 {\scriptsize $\pm$ 0.9} & 45.2 & 0.46 \\
  & LESS    & 35.7 {\scriptsize $\pm$ 0.3} & 32.9 {\scriptsize $\pm$ 1.6} & 49.2 {\scriptsize $\pm$ 1.2} & \textbf{72.9 {\scriptsize $\pm$ 0.2}} & 47.7 & 0.46 \\
  & S2L    &  37.0 {\scriptsize $\pm$ 1.1} &  34.8 {\scriptsize $\pm$ 3.8} &  47.0 {\scriptsize $\pm$ 1.9} &69.6 {\scriptsize $\pm$ 0.9} & 47.1 & 0.45 \\
  & \cellcolor{clustergray}\textbf{\algname{}} 
             & \cellcolor{clustergray}\textbf{37.1 {\scriptsize $\pm$ 0.2}} & \cellcolor{clustergray}\textbf{36.2 {\scriptsize $\pm$ 0.3}} & \cellcolor{clustergray}\textbf{49.3 {\scriptsize $\pm$ 1.7}} & \cellcolor{clustergray}70.3 {\scriptsize $\pm$ 1.3} & \cellcolor{clustergray}\textbf{48.2} & \cellcolor{clustergray}0.45 \\
\midrule
\multirow{6}{*}{\makecell[l]{0.50\\(31\%)}} 
  & Random  & 36.7 {\scriptsize $\pm$ 1.6} & 34.4 {\scriptsize $\pm$ 0.8} & 48.9 {\scriptsize $\pm$ 1.6} & 69.2 {\scriptsize $\pm$ 1.6} & 47.3& 0.32 \\
  & Uniform & 36.1 {\scriptsize $\pm$ 0.5} & \textbf{36.3 {\scriptsize $\pm$ 0.7}} & 48.4 {\scriptsize $\pm$ 3.3} & 69.3 {\scriptsize $\pm$ 1.0} & 47.5 & 0.36  \\
  & IFD     & 36.0 {\scriptsize $\pm$ 0.2} & 36.1 {\scriptsize $\pm$ 1.3} & 45.9 {\scriptsize $\pm$ 0.8} & 66.5 {\scriptsize $\pm$ 0.9} & 46.1 & 0.32 \\
  & LESS    & 35.5 {\scriptsize $\pm$ 0.8} & 34.1 {\scriptsize $\pm$ 0.9} & 47.2 {\scriptsize $\pm$ 1.8} & \textbf{72.2 {\scriptsize $\pm$ 0.7}} & 47.2 & 0.32 \\
  & S2L    &  36.1 {\scriptsize $\pm$ 0.9} &  35.0 {\scriptsize $\pm$ 2.5} &  48.6 {\scriptsize $\pm$ 3.6} & 68.6 {\scriptsize $\pm$ 0.9} & 47.1 & 0.31 \\
  & \cellcolor{clustergray}\textbf{\algname{}} 
             & \cellcolor{clustergray}\textbf{37.4 {\scriptsize $\pm$ 0.3}} & \cellcolor{clustergray}35.3 {\scriptsize $\pm$ 2.6} & \cellcolor{clustergray}\textbf{49.0 {\scriptsize $\pm$ 1.7}} & \cellcolor{clustergray}70.4 {\scriptsize $\pm$ 1.5} & \cellcolor{clustergray}\textbf{48.0} & \cellcolor{clustergray}0.30 \\
\bottomrule
\end{tabular}
}
\end{table}

\subsection{Experimental Setups}

\noindent \textbf{Baselines.}
We compare \algname{} with two categories of baselines:
(1) \textbf{sampling-based methods}, including \textbf{random sampling}, the default strategy in standard DP, and 
\textbf{uniform sampling}~\cite{loshchilov2016online}, an idealized setting where data are evenly distributed across clusters; and
(2) \textbf{selection-based methods}, including \textbf{IFD sampling}~\citep{li2024quantity}, which is based on the instruction-following difficulty,
\textbf{LESS sampling}~\cite{xia2024less}, which prioritizes data based on gradient-based influence estimation, and
\textbf{S2L}~\cite{yang2024smalltolarge}, which selects data by clustering small-model training trajectories.

\smallskip
\noindent \textbf{Datasets.}
For fine-tuning, we use the \textit{code instruction datasets} \textrm{Magicoder-OSS-Instruct-75K}~\citep{wei2024magicoder} and \textrm{Evol-Instruct-Code-80K}~\citep{luo2024wizardcoder}, 
as well as the \textit{medical dataset} \textrm{MedInstruct-52K}~\citep{zhang2023alpacare}. 
We first prune clusters to the pivot size and then apply sampling ratios of 1.00, 0.75, and 0.50.

\smallskip
\noindent \textbf{Models.}
We evaluate three LLMs: \textrm{CodeLlama-Python-7B} and \textrm{CodeLlama-Python-13B}~\citep{roziere2023code} on code datasets, and \textrm{Llama-2-7B}~\citep{touvron2023llama} on the medical dataset with four H100 GPUs, except for Section~\ref{sec:gpu_scale}, which uses eight H100 GPUs.

\smallskip
\noindent \textbf{Benchmarks.}
% For code models, we report pass@1 with greedy decoding on HumanEval~\citep{chen2021evaluating}, HumanEval+~\citep{liu2023your}, MBPP~\citep{austin2021program}, and MBPP+~\citep{liu2023your}.
The trained code models are evaluated on four code domain benchmarks: HumanEval~\citep{chen2021evaluating}, HumanEval+~\citep{liu2023your}, MBPP~\citep{austin2021program} and MBPP+~\citep{liu2023your}.
For code benchmarks, we use greedy decoding to generate a single sample per problem, with results reported using pass@1.
For medical models, we evaluate performance using accuracy on four medical QA benchmarks: MedMCQA~\cite{pal2022medmcqa}, MedQA~\cite{jin2021disease}, PubMedQA~\cite{jin2019pubmedqa}, and MMLU medical subsets~\cite{hendrycks2021measuring}.

\smallskip
\noindent \textbf{Implementation Details.}
For \textrm{Magicoder-OSS-Instruct-75K}, we fine-tune \textrm{CodeLlama-Python-7B} for 2 epochs on four NVIDIA H100 GPUs using PyTorch DDP and Adafactor with a learning rate of $5e{-5}$. The global batch size is 512.
For \textrm{AlpaCare-MedInstruct-52K}, we fine-tune \textrm{Llama-2-7B} for 3 epochs on four H100 GPUs using FSDP and AdamW with a learning rate of $2e{-5}$.
We report results averaged over three random seeds and set the cluster capacity hyperparameter to $\alpha=1.5$.
All experiments are repeated \emph{three} times.

See Appendix~\ref{app:exp_set_app} for more detailed setups.

\subsection{Main Results}
\subsubsection{Results on Code Generation}
% Table~\ref{tab:main_magi} compares \algname{} with random, uniform, IFD, LESS, and S2L sampling under varying data budgets.
% \algname{} consistently achieves the best overall performance while substantially reducing training time.
% The model reaches an average score of 56.5 using only 59\% of the data, close to the full-data random baseline of 57.6, and remains competitive even at 40\% data usage with up to 58\% training-time reduction.
% These results demonstrate that our method improves data efficiency by preserving an informative subset of the training data.
In Table~\ref{tab:main_magi}, we compare \algname{} with random, uniform, IFD, and LESS sampling under varying data budgets.
\algname{} consistently achieves the best overall performance while also reducing training time.
By selectively removing less informative samples while preserving the overall cluster structure, \algname{} improves data efficiency without sacrificing performance.
For instance, with only 59\% of the data, it achieves an average score of 56.5, which is close to the full-data random baseline (57.6).
Even when the data usage is reduced to 40\%, it maintains competitive performance while reducing training time by up to 58\%.
These results demonstrate that \algname{} effectively preserves informative samples, maintaining performance with markedly fewer training samples.

\subsubsection{Results on Medical QA} % : Medical Domain / (Medical).
Table~\ref{tab:main_med} compares \algname{} with several baselines in the medical domain.
To assess whether the proposed strategy generalizes beyond the code domain, we fine-tune the model on a medical dataset and compare it with existing sampling methods.
The results show that \algname{} consistently outperforms all baselines in this setting.
Moreover, compared with full-dataset fine-tuning, our method exhibits the smallest performance drop while using a reduced training set, indicating that it preserves informative samples effectively even under aggressive data reduction.
Even with only 31\% of the full dataset, the model maintains performance comparable to full-dataset training while reducing the overall training time by 69.6\%.
These results suggest that the proposed method generalizes well across domains and provides stable performance even under substantial data reduction.
%Table~\ref{tab:main_med} evaluates \algname{} on the medical domain to assess its generalization beyond code datasets.
% \algname{} consistently outperforms all baselines and shows the smallest performance drop compared with full-dataset fine-tuning, achieving comparable performance with only 31\% of the data while reducing training time by 69.6\%.
% These results suggest that the proposed method generalizes well across domains and provides stable performance even under substantial data reduction.

% \begin{table}[t]
% \centering
% \caption{Comparison of data selection and total time\,(hrs) across baselines under $r=1.0$ and $r=0.75$.}
% \label{tab:selection_time}
% \resizebox{\columnwidth}{!}{
% \setlength{\tabcolsep}{0pt}
% \begin{tabular}{lcccccc}
% \toprule
% \multirow{2}{*}{\textbf{Method}}  & \multicolumn{3}{c}{\textbf{\textrm{Magicoder-OSS-Instruct-75K}}} & \multicolumn{3}{c}{\textbf{\textrm{MedInstruct-52K}}} \\
% \cmidrule(lr){2-4} \cmidrule(lr){5-7}
% & Selection & \textbf{$r=1$} & \textbf{$r=0.75$} & Selection & \textbf{$r=1$} & \textbf{$r=0.75$} \\
% \midrule
% IFD & 1.62 & 2.49 & 2.28 & 1.01 & 1.60 & 1.47 \\
% LESS & 2.79 & 3.69 & 3.46 & 1.67 & 2.26 & 2.13 \\
% S2L & 0.95 & 1.85 & 1.61 & 0.72 & 1.29 &  1.17 \\
% \textbf{\algname{}} & \textbf{0.19} & \textbf{1.11} & \textbf{0.88} & \textbf{0.13} & \textbf{0.72} & \textbf{0.58} \\
% \bottomrule
% \end{tabular}
% }
% \vspace{-0.3cm}
% \end{table}

\begin{table}[t]
\centering
\caption{Comparison of data selection, training, and total model building time\,(hrs) across baselines.}
\label{tab:selection_time}
\resizebox{\columnwidth}{!}{
\begin{tabular}{ll ccc ccc}
\toprule
\multicolumn{2}{c}{\multirow{2}{*}[-0.5ex]{Data Selection}}
& \multicolumn{6}{c}{Time} \\
\cmidrule(lr){3-8}
\multicolumn{2}{c}{}
& \multicolumn{3}{c}{\textrm{CodeLlama-7B}}
& \multicolumn{3}{c}{\textrm{Llama-2-7B}} \\
\cmidrule(lr){1-2} \cmidrule(lr){3-5} \cmidrule(lr){6-8}
\textbf{Ratio} & \textbf{Sampling} & \textbf{Sel.} & \textbf{Train} & \textbf{Total} & \textbf{Sel.} & \textbf{Train} & \textbf{Total} \\
\midrule
\multirow{4}{*}{1.00} & 
    IFD & 1.62 & 0.87 & 2.49 & 1.01 & 0.59 & 1.60 \\
    & LESS & 2.79 & 0.90 & 3.69 & 1.67 & 0.59 & 2.26 \\
    & S2L & 0.95 & 0.90 & 1.85 & 0.72 & 0.57 &  1.29 \\
    & \cellcolor{clustergray}\textbf{\algname{}} & \cellcolor{clustergray}\textbf{0.19} & \cellcolor{clustergray}0.92 & \cellcolor{clustergray}\textbf{1.11} & \cellcolor{clustergray}\textbf{0.13} & \cellcolor{clustergray}0.59 & \cellcolor{clustergray}\textbf{0.72} \\
\midrule
\multirow{4}{*}{0.75} & 
    IFD & 1.62 & 0.66 & 2.28 & 1.01 & 0.46 & 1.47 \\
    & LESS & 2.79 & 0.67 & 3.46 & 1.67 & 0.46 & 2.13 \\
    & S2L & 0.95 & 0.66 & 1.61 & 0.72 & 0.45 & 1.17 \\
    & \cellcolor{clustergray}\textbf{\algname{}} & \cellcolor{clustergray}\textbf{0.19} & \cellcolor{clustergray}0.69 & \cellcolor{clustergray}\textbf{0.88} & \cellcolor{clustergray}\textbf{0.13} & \cellcolor{clustergray}0.45 & \cellcolor{clustergray}\textbf{0.58} \\
\bottomrule
\end{tabular}
}
% \vspace{-0.5cm}
\end{table}

\subsubsection{Analysis of Selection Strategy Efficiency}
Table~\ref{tab:selection_time} compares the selection time of different data selection methods.
\algname{} consistently achieves the shortest selection time among the compared methods.
Across both code and medical-domain datasets, our method substantially reduces data selection time compared with LESS, IFD, and S2L.
% This efficiency arises because \algname{} requires only one forward pass, while other baselines require expensive computations for data selection, as analyzed in Section~\ref{sec:complexity}.
This efficiency arises because \algname{} requires only one forward pass, while IFD requires two forward passes, LESS requires backward passes, and S2L incurs additional overhead from small-model training trajectory collection, as analyzed in Section~\ref{sec:complexity}.

% \begin{table}[t]
% \centering
% \caption{Performance comparison of random sampling and \algname{} at selection ratio 1.00\,(21\%) on 8 GPUs.
% }
% \label{tab:gpu_scale}
% \begin{minipage}{0.49\textwidth}
% \centering
% \resizebox{1.0\columnwidth}{!}{
% \begin{tabular}{lcccc}
% \toprule
% \textbf{Method} & \textbf{HumanEval(+)} & \textbf{MBPP(+)} & \textbf{Average} & \textbf{$\Delta$ Score (\%)} \\
% \midrule
% % \multirow{2}{*}{\makecell[l]{1.00\\(20.8\%)}} & 
%     Random & 50.1(44.3) & 65.2(53.2) & 53.3 & 7.52 \\
% \algname{} & \textbf{55.7(49.4)} & \textbf{64.6(53.3)} & \textbf{57.8} & \textbf{3.20} \\
% \bottomrule
% \end{tabular}}
% \end{minipage}
% \end{table}

\begin{table}[t]
\centering
\caption{Performance comparison of random sampling and \algname{} on 8 GPUs with CodeLlama-Python-7B trained on Magicoder-OSS-Instruct-75K.
% at selection ratio 1.00\,(21\%) 
}
\label{tab:gpu_scale}
\begin{minipage}{0.49\textwidth}
\centering
\resizebox{1.0\columnwidth}{!}{
\begin{tabular}{llccc}
\toprule
\textbf{Sel. Ratio} & \textbf{Sampling} & \textbf{HumanEval(+)} & \textbf{MBPP(+)} & \textbf{Avg.} \\
\midrule
Full (100\%) & Random & 56.7 (51.8) & 66.3 (55.3) & 57.5 \\
\midrule
\multirow{2}{*}{\makecell[l]{1.00 (41.6\%)}} & 
    Random & 50.1 (44.3) & \textbf{65.2} (53.2) & 53.3 \\
    & \cellcolor{clustergray}\algname{} & \cellcolor{clustergray}\textbf{55.7 (49.4)} & \cellcolor{clustergray}64.6 (\textbf{53.3}) & \cellcolor{clustergray}\textbf{55.8}\\
\midrule
\multirow{2}{*}{\makecell[l]{0.75 (31.2\%)}} & 
    Random & 52.7 (45.8) & 64.1 (53.3) & 54.0 \\
    & \cellcolor{clustergray}\algname{} & \cellcolor{clustergray}\textbf{53.4 (47.6)} & \cellcolor{clustergray}\textbf{65.1 (54.0)} & \cellcolor{clustergray}\textbf{55.0} \\
% \midrule
% \multirow{2}{*}{\makecell[l]{0.50 (10.4\%)}} & 
%     Random & 51.6(45.5) & \textbf{65.0}(53.4) & 53.9 \\
%     & \algname{} & \textbf{53.4(46.3)} & 64.6(\textbf{53.3}) & \textbf{54.4} \\
\bottomrule
\end{tabular}}
\end{minipage}
\vspace{-0.1cm}
\end{table}

\subsection{Analysis of GPU Scalability}
\label{sec:gpu_scale}
Table~\ref{tab:gpu_scale} repeats part of the experiments in Table~\ref{tab:main_magi}, scaling from four GPUs to eight GPUs. \algname{} continues to outperform the baseline when using more GPUs.
Despite using only 41.6\% of the original dataset, \algname{} achieves performance comparable to, and slightly better than, full-data training while clearly outperforming random sampling at the same data ratio.
A similar trend is observed at the 31.2\% data budget, where \algname{} consistently improves over random sampling.
These results indicate that our cluster-aware data selection scales effectively with an increasing number of GPUs.
% Table~\ref{tab:gpu_scale} repeats part of the experiments in Table~\ref{tab:main_magi}, scaling from four to eight GPUs. \algname{} continues to outperform the baseline while using more GPUs.
% Despite using only 41.6\% of the full dataset, \algname{} achieves performance comparable to full-data training while clearly outperforming random sampling at the same data ratio.
% A similar trend is observed at the 31.2\% data budget, where \algname{} consistently improves over random sampling.
% These results indicate that our method scales effectively with an increasing number of GPUs.

\begin{figure}[t]
    \centering
    \includegraphics[width=0.49\textwidth, trim={0 0 0 0}, clip]{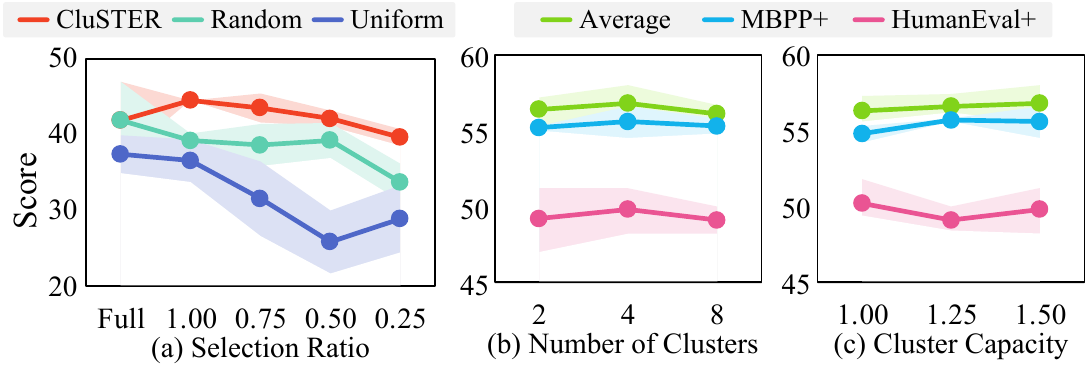}
    \caption{Hyperparameter sensitivity analysis of \algname{} with respect to (a) the selection ratio $r$, (b) the number of clusters $K$, and (c) the cluster capacity $\alpha$.}
    \label{fig:sensitivity_analysis}
    \vspace{-0.3cm}
\end{figure}

\subsection{Hyperparameter Sensitivity Analysis}
Figure~\ref{fig:sensitivity_analysis} summarizes the sensitivity of \algname{} to its key hyperparameters, selection ratio $r$, number of clusters $K$, and cluster capacity $\alpha$.

\subsubsection{Selection Ratio $r$}
To assess the effect of the final data budget, we vary the selection ratio $r$ after cluster balancing.
This experiment examines whether the selected subset continues to preserve useful training signals as the effective data usage decreases.
Figure~\ref{fig:sensitivity_analysis}\,(a) shows how the HumanEval+ score changes as the selection ratio decreases on \textrm{Evol-Instruct-Code-80K}.
Our approach degrades gradually and remains strong at low ratios, whereas random and uniform sampling drop sharply, widening the performance gap.
This suggests that \algname{} preserves informative and diverse training signals even when only a small fraction of the balanced data is used.

\subsubsection{Number of Clusters $K$}
We examine the sensitivity of \algname{} to the number of clusters $K$, which controls the granularity of the gradient-proxy partition.
This analysis tests whether the method remains stable when update-similarity groups are defined more coarsely or more finely.
Figure~\ref{fig:sensitivity_analysis}\,(b) compares different values of $K$.
Although $K$ defaults to the number of data-parallel workers, \algname{} remains robust across alternative values, suggesting that it is not sensitive to the exact clustering granularity.
This indicates that the benefit of the cluster-aware sampling framework does not come from a narrowly tuned cluster count, but from maintaining balanced coverage over diverse gradient-proxy regions.

\subsubsection{Cluster Capacity $\alpha$}
We analyze the role of the cluster capacity $\alpha$, which determines the maximum allowed cluster size in capacity-constrained clustering.
By changing $\alpha$, we examine whether performance is affected by making the clusters more balanced or allowing them to better follow the original data distribution.
Figure~\ref{fig:sensitivity_analysis}\,(c) varies the cluster capacity $\alpha$.
The performance remains stable across different values, indicating that the default setting $\alpha=1.5$ provides a reasonable balance between cluster balancing and distribution preservation.
This stability suggests that our framework is not overly dependent on a precise capacity constraint, as long as the clustering avoids highly imbalanced partitions.

Overall, these sensitivity results show that \algname{} remains stable under changes in hyperparmeters.
% the final data budget, clustering granularity, and capacity constraint.
This indicates that the method does not rely on a narrowly tuned hyperparameter configuration.

\begin{table}[t]
\centering
\caption{
Pass@1 of sampling strategies for \textrm{CodeLlama-Python-13B} on \textrm{Magicoder-OSS-Instruct-75K}.}
\label{tab:main_13b}
\resizebox{\linewidth}{!}{
\begin{tabular}{ll ccc}
\toprule
\textbf{Sel. Ratio} & \textbf{Sampling} & \textbf{HumanEval(+)} & \textbf{MBPP(+)} & \textbf{Avg.} \\
\midrule
\multirow{1}{*}{\makecell[l]{Full (100\%)}}
  & Random  & 63.4 (54.3) & 69.6 (57.8) & 61.3 \\
\midrule
\multirow{2}{*}{\makecell[l]{1.00 (81.5\%)}}
  & Random  & 62.2 (53.7) & 70.1 (58.4) & 61.1 \\
  & \cellcolor{clustergray}\textbf{\algname{}} 
             & \cellcolor{clustergray}\textbf{62.4} \textbf{(54.1)} 
             & \cellcolor{clustergray}\textbf{70.9 (58.7)} 
             & \cellcolor{clustergray}\textbf{61.5} \\ % 61.6 (53.7)	70.6 (59.0)	61.2
\midrule
\multirow{2}{*}{\makecell[l]{0.75 (61.1\%)}}
  & Random  & 62.8 (\textbf{54.6}) & 70.7 (59.0) & 61.8 \\ 
  & \cellcolor{clustergray}\textbf{\algname{}} 
             & \cellcolor{clustergray}\textbf{63.8} (53.2) 
             & \cellcolor{clustergray}\textbf{72.1} (\textbf{60.8}) 
             & \cellcolor{clustergray}\textbf{62.5} \\ 
\bottomrule
\end{tabular}}
\vspace{-0.3cm}
\end{table}
\subsection{Analysis of Model Scalability}
To examine whether the effectiveness of \algname{} transfers beyond the default 7B-scale setting, we further evaluate it on a larger \textrm{CodeLlama-Python-13B} model fine-tuned with \textrm{Magicoder-OSS-Instruct-75K}.
This experiment tests whether the gradient-proxy cluster structure remains useful as the target model capacity increases.
The results in Table~\ref{tab:main_13b} show that our method remains effective at the 13B scale, achieving competitive performance while using a reduced portion of the training data.
We further verify the effectiveness of \algname{} at the 13B scale on a distinct medical instruction-tuning setting; detailed results are provided in Appendix~\ref{appendix:additional_13b}.
This suggests that \algname{} is not limited to the 7B setting and can support data-efficient fine-tuning for larger models.

\begin{table}[t]
\centering
\caption{Effect of the weighting mechanism across different data ratios on \textrm{Magicoder-OSS-Instruct-75K}. %models and datasets
% Removing the weighting results in consistent performance degradation, highlighting its contribution to stable learning.
}
\label{tab:ablation_weight}
\resizebox{1\columnwidth}{!}{
\begin{NiceTabular}{l c ccc}
\toprule
\textbf{Sel. Ratio} & \textbf{Method} & \textbf{HumanEval(+)} & \textbf{MBPP(+)} & \textbf{Avg.} \\
% \midrule
% \multicolumn{5}{c}{\textbf{CodeLlama-Python-7B} fine-tuned on \textbf{Evol-Instruct-Code-80K}} \\
% \midrule
% \multirow{2}{*}{1.00 (68\%)} &
% w/ weighting & \textbf{50.2(44.5)} 
%              & \textbf{62.5(52.1)} & \textbf{52.3} \\
% & w/o weighting  & 47.3(41.5) & 56.1(46.9) & 48.0 \\
% \midrule
% \multirow{2}{*}{0.75 (51\%)} &
% w/ weighting & \textbf{48.8(43.5)} 
%              & \textbf{62.6(52.0)} & \textbf{51.7} \\
% & w/o weighting  & 47.2(40.1) & 61.8(50.2) & 49.8 \\
\midrule
% \multicolumn{5}{c}{\textbf{CodeLlama-Python-7B} fine-tuned on \textbf{Magicoder-OSS-Instruct-75K}} \\
% \midrule
\multirow{2}{*}{1.00 (79\%)} & w/o weighting  & 52.3(47.6) & 66.8(55.3) & 55.5 \\
& \textbf{w/ weighting} & \textbf{54.9(49.8)} & \textbf{67.0(55.6)} & \textbf{56.8} \\
\midrule
\multirow{2}{*}{0.75 (59\%)} &
w/o weighting  & 52.4(47.4) & 66.3(55.2) & 55.3 \\
& \textbf{w/ weighting} & \textbf{54.5(49.4)} & \textbf{66.9(55.3)} & \textbf{56.5} \\
\bottomrule
\end{NiceTabular}
}
\vspace{-0.3cm}
\end{table}

\subsection{Effect of Weighting Mechanism}
As shown in Table~\ref{tab:ablation_weight}, the proposed weighting mechanism consistently improves overall performance across different data ratios.
Across all settings, the weighted variant achieves higher scores, demonstrating improved robustness under varying data budgets.
% While minor fluctuations are observed on individual benchmarks, the overall trend remains consistent: weighting leads to stronger performance on HumanEval-style tasks and improves the aggregated average.
These results suggest that weighting samples according to their cluster sizes helps preserve the original contribution of each cluster during optimization.
This balancing effect leads to more stable performance across different data ratios.

\begin{table}[t]
\centering
\caption{
% Effect of the data selection mechanisms: random, core-centric, and peripheral selection.
Effect of data selection mechanisms on benchmark performance and gradient diversity\,(GD) using \textrm{Magicoder-OSS-Instruct-75K}.
}
\label{tab:ablation_prune}
\resizebox{1\columnwidth}{!}{
\begin{tabular}{l l cccc}
\toprule
\textbf{Sel. Ratio} & \textbf{Sel. Metric} & \textbf{GD} & \textbf{HumanEval(+)} & \textbf{MBPP(+)} & \textbf{Avg.} \\
\midrule
% \multicolumn{5}{c}{\textbf{CodeLlama-Python-7B} fine-tuned on \textbf{Evol-Instruct-Code-80K}} \\
% \midrule
% \multirow{3}{*}{1.00 (68\%)} &
% Random & 47.2(42.9) & 62.7(53.1) & 51.5 \\
% & Core-centric  & 45.7(41.9) & 62.1(51.8) & 50.4 \\
% & Peripheral  & \textbf{50.2(44.6)} & \textbf{62.5(52.1)} & \textbf{52.3} \\
% \midrule
% \multicolumn{5}{c}{\textbf{CodeLlama-Python-7B} fine-tuned on \textbf{Magicoder-OSS-Instruct-75K}} \\
% \midrule
\multirow{3}{*}{1.00 (79\%)} &
Random & 1.57 & 54.0 (48.8) & 66.4 (55.7) & 56.2 \\
& Core-centric & 1.51 & 54.3 (48.5) & 66.8 (54.8) & 56.1 \\
& \textbf{Peripheral} & \textbf{1.62} & \textbf{54.9(49.8)} 
             & \textbf{67.0(55.6)} 
             & \textbf{56.8} \\
\midrule
\multirow{3}{*}{0.75 (59\%)} &
Random & 1.58 & 53.6 (48.4) & 66.7 (55.5) & 56.1 \\
& Core-centric & 1.46 & 53.5 (47.0) & 66.0 (54.9) & 55.3 \\
& \textbf{Peripheral} & \textbf{1.67} & \textbf{54.5(49.4)} 
             & \textbf{66.9(55.3)} 
             & \textbf{56.5} \\
\bottomrule
\end{tabular}}
\vspace{-0.3cm}
\end{table}

\subsection{Analysis of Intra-Level Coverage}
After cluster-level balancing secures coverage over diverse gradient-space regions, the within-cluster selector determines which samples represent each cluster.
To evaluate the effect of different selection criteria, we compare three strategies introduced in Section~\ref{sec:balanced_sampling}: random, core-centric, and peripheral selection.
As shown in Table~\ref{tab:ablation_prune}, the results exhibit a clear ordering: peripheral selection performs best across most data ratios, followed by random selection, while core-centric selection generally yields the weakest results.
A similar ordering is observed in gradient diversity, where peripheral selection achieves the highest value, followed by random and core-centric selection.
This suggests that peripheral samples better preserve intra-cluster diversity, whereas core-centric samples mostly represent common gradients.
Thus, once clustering ensures inter-cluster diversity, selecting peripheral samples further improves intra-cluster coverage by retaining diverse samples within each cluster.
Overall, these results show that peripheral selection improves both downstream performance and gradient diversity, making it a more effective within-cluster selection strategy for \algname{}.

\begin{figure}[t]
    \centering
    \includegraphics[width=0.49\textwidth, trim={15 8 0 18}, clip]{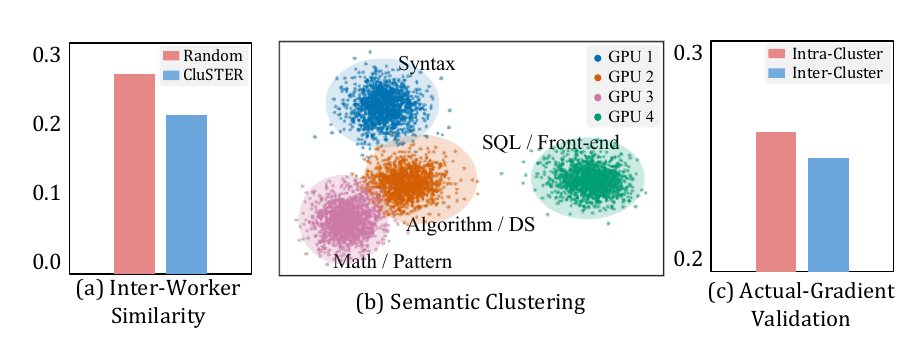}
    \caption{Analysis of inter-worker gradient redundancy and cluster-aware GPU allocation.
    (a) Inter-worker gradient cosine similarity.
    % (b) Gradient diversity.
    (b) Visualization of semantic clustering on a 5\% subset of \textrm{Magicoder-OSS-Instruct-75K}, where colors denote GPU assignments and text labels indicate dominant task categories.
    (c) Actual-gradient cosine similarity for intra- and inter-cluster sample pairs, where clusters are constructed using the proposed gradient proxies.
    }
    \label{fig:cos_sim}
    \vspace{-0.3cm}
\end{figure}

\subsection{Analysis of Inter-Level Coverage}
We examine whether \algname{} reduces gradient redundancy while preserving meaningful inter-level coverage.
Figure~\ref{fig:cos_sim}\,(a) reports the average inter-worker gradient cosine similarity over training.
\algname{} shows lower similarity than random sampling, indicating less redundant worker-level gradient signals and more diverse local updates across workers, thereby improving the diversity of gradient information in the global update.
% We examine whether \algname{} reduces gradient redundancy while preserving meaningful inter-level coverage.
% Figure~\ref{fig:cos_sim}\,(a) reports the average inter-worker gradient cosine similarity over training.
% \algname{} shows lower similarity than random sampling, indicating less redundant worker-level gradient signals and more diverse local updates across workers.

Figure~\ref{fig:cos_sim}\,(b) further visualizes the projected gradient embeddings.
The samples form distinct clusters whose dominant labels align with recognizable task categories, such as algorithm/data structure, SQL/front-end, mathematical reasoning, and basic syntax.
The dominant cluster categories show an 82\% agreement with manual annotations, suggesting that gradient-space clustering captures task-level semantics.

Figure~\ref{fig:cos_sim}\,(c) further validates the proxy clusters using actual per-sample gradients.
Intra-cluster pairs show higher average gradient cosine similarity than inter-cluster pairs, with a statistically significant difference under a cluster-size-preserving permutation test ($p=0.0464$).
Together, these results indicate that \algname{} improves inter-level coverage by reducing gradient redundancy and preserving task-level diversity in data-parallel training.

% \subsection{Analysis of Inter-Level Coverage}
% We examine whether the gradient-space clustering in \algname{} provides meaningful inter-level coverage.
% As shown in Figure~\ref{fig:cos_sim}, the samples form distinct clusters in the projected gradient space.
% A qualitative inspection shows that these clusters largely correspond to task categories, including algorithm and data structure problems, SQL and front-end code generation tasks, mathematical reasoning and pattern-matching instructions, and basic programming grammar or syntax exercises.
% The dominant cluster categories show an 82\% agreement with manually identified instruction categories, suggesting that the gradient-space clustering captures task-level instructional semantics.

% Figure~\ref{fig:cos_sim} further shows that \algname{} assigns samples to GPUs in a more cluster-aware manner than random sampling.
% Accordingly, our method yields lower inter-worker gradient cosine similarity, indicating less redundant local updates across GPUs.
% These results demonstrate that \algname{} improves inter-level coverage by preserving task-level diversity and reducing gradient redundancy in data-parallel training.

See Appendix~\ref{app:exp_app} for more experimental results.
\bigskip
\section{Conclusion}
In this work, we studied how sampling affects efficiency and stability in DP fine-tuning under redundant and imbalanced instruction data.
We proposed \algname{}, which combines gradient-space clustering, cluster-aware balanced sampling, and weighted updates to reduce redundant computation and stabilize multi-GPU optimization.
% which combines gradient-based clustering, cluster-aware balanced sampling, and cluster-weighted updates to improve multi-GPU batch balance while preserving the original data distribution.
% we investigated the impact of data sampling strategies on the efficiency and stability of large-scale language model training under data parallelism.
% We showed that random sampling, though widely used, introduces semantic redundancy and imbalance across batches, leading to high gradient variance and inefficient optimization.
% To address these limitations, we proposed \algname{}, a cluster-aware balanced sampling framework that balances data allocation through gradient-based clustering and cluster-weighted updates.
% By improving batch balance across GPUs while preserving the original data distribution, \algname{} enables faster training with competitive performance under reduced data budgets.
By improving batch balance across GPUs while preserving the original data distribution, \algname{} enables faster training with competitive performance under reduced data budgets, while also reducing the preprocessing time required for data selection.
% through its lightweight selection procedure.
% effectively reduces gradient variance while preserving the intrinsic data distribution, resulting in faster convergence with improved training efficiency.
% Extensive experiments on multiple large language models and instruction-tuning datasets demonstrated that \algname{} achieves comparable or superior performance to full-dataset fine-tuning while improving training efficiency.
These results highlight the importance of structured data sampling in distributed optimization and suggest a new direction for variance-aware data parallelism.
\newpage
\section*{Limitations}

While our experiments demonstrate the effectiveness of the proposed sampling strategy under multi-GPU data-parallel training, several limitations remain. First, our evaluation is conducted within a single-node multi-GPU setting. Although we set a large training scale up to eight GPUs with a batch size of 512 and demonstrated consistent improvements, we do not extensively explore multi-node environments with larger batch sizes and higher communication overhead.
Second, due to computational constraints, our experiments are limited to moderately sized instruction-tuning datasets, and we do not evaluate the method on substantially larger corpora containing hundreds of thousands or millions of samples. Investigating the behavior of the proposed approach under significantly larger batch regimes and substantially larger datasets remains an important direction for future work.

\section*{Ethics Statement}

This work uses existing pretrained models, training datasets, evaluation benchmarks, and baseline methods only for research purposes. 
We cite the creators of all artifacts used in our experiments, including pretrained models, instruction-tuning datasets, evaluation benchmarks, and baseline methods. 
The training datasets are used only for studying data selection and instruction tuning, while the evaluation benchmarks are used only for measuring downstream task performance. 
We use publicly available datasets and benchmarks and do not collect new user data. 
We rely on the preprocessing and release procedures of the dataset creators for handling personally identifying or offensive content, and we do not attempt to recover or expose such information. 
Pretrained models are used only for research-purpose fine-tuning. 
Our use of these artifacts is consistent with their intended research use and original access conditions.

We reviewed the licenses and terms of use of the artifacts used in this work. 
The pretrained models, including \textrm{CodeLlama-Python-7B}, \textrm{CodeLlama-Python-13B}, \textrm{Llama-2-7B}, and \textrm{Llama-2-13B} are used under the Llama 2 Community License. 
For training datasets, \textrm{Magicoder-OSS-Instruct-75K} is released under the MIT license, \textrm{Evol-Instruct-Code-80K} under the Creative Commons Attribution-NonCommercial-ShareAlike 4.0 International license, and \textrm{MedInstruct-52K} under the Creative Commons Attribution-NonCommercial 4.0 International license. 
For evaluation benchmarks, HumanEval, MedQA, PubMedQA, and MMLU are used under the MIT license; HumanEval+ and MBPP+ under the Apache-2.0 license; MBPP under the CC BY 4.0 license; and MedMCQA under the MIT license.

We do not redistribute original datasets, benchmark data, or pretrained model weights. 
Artifacts created in this work are intended only for research, reproduction, and analysis, and are not used outside the research context. 
Although our method does not introduce a user-facing application, fine-tuned models may inherit limitations from pretrained models and training datasets and should be evaluated before high-stakes use.

\section*{Acknowledgments}

This work was supported by Institute of Information \& Communications Technology Planning \& Evaluation\,(IITP) grant funded by the Korea government\,(MSIT) (No.\ RS-2020-II200862, DB4DL: High-Usability and Performance In-Memory Distributed DBMS for Deep Learning, 50\% and No.\ RS-2022-II220157, Robust, Fair, Extensible Data-Centric Continual Learning, 40\%) and Artificial intelligence industrial convergence cluster development project funded by the Ministry of Science and ICT\,(MSIT, Korea) \& Gwangju Metropolitan City (AICA-26-AE000008, 10\%).

% Bibliography entries for the entire Anthology, followed by custom entries
%\bibliography{anthology,custom}
% Custom bibliography entries only
\bibliography{custom}

\begin{thebibliography}{38}
\providecommand{\natexlab}[1]{#1}

\bibitem[{Ash et~al.(2020)Ash, Zhang, Krishnamurthy, Langford, and Agarwal}]{ash2020deep}
Jordan~T. Ash, Chicheng Zhang, Akshay Krishnamurthy, John Langford, and Alekh Agarwal. 2020.
\newblock \href {https://openreview.net/forum?id=ryghZJBKPS} {Deep batch active learning by diverse, uncertain gradient lower bounds}.
\newblock In \emph{Proceedings of the International Conference on Learning Representations (ICLR)}.

\bibitem[{Austin et~al.(2021)Austin, Odena, Nye, Bosma, Michalewski, Dohan, Jiang, Cai, Terry, Le, and Sutton}]{austin2021program}
Jacob Austin, Augustus Odena, Maxwell Nye, Maarten Bosma, Henryk Michalewski, David Dohan, Ellen Jiang, Carrie Cai, Michael Terry, Quoc Le, and Charles Sutton. 2021.
\newblock Program synthesis with large language models.
\newblock \emph{arXiv preprint arXiv:2108.07732}.

\bibitem[{Bottou et~al.(2018)Bottou, Curtis, and Nocedal}]{bottou2018optimization}
L{\'e}on Bottou, Frank~E Curtis, and Jorge Nocedal. 2018.
\newblock Optimization methods for large-scale machine learning.
\newblock \emph{SIAM Review}, 60(2):223--311.

\bibitem[{Chen et~al.(2021)Chen, Tworek, Jun, Yuan, de~Oliveira~Pinto, Kaplan, Edwards, Burda, Joseph, Brockman, Ray, Puri, Krueger, Petrov, Khlaaf, Sastry, Mishkin, Chan, Gray, Ryder, Pavlov, Power, Kaiser, Bavarian, Winter, Tillet, Such, Cummings, Plappert, Chantzis, Barnes, Herbert{-}Voss, Guss, Nichol, Paino, Tezak, Tang, Babuschkin, Balaji, Jain, Saunders, Hesse, Carr, Leike, Achiam, Misra, Morikawa, Radford, Knight, Brundage, Murati, Mayer, Welinder, McGrew, Amodei, McCandlish, Sutskever, and Zaremba}]{chen2021evaluating}
Mark Chen, Jerry Tworek, Heewoo Jun, Qiming Yuan, Henrique~Pond{\'{e}} de~Oliveira~Pinto, Jared Kaplan, Harri Edwards, Yuri Burda, Nicholas Joseph, Greg Brockman, Alex Ray, Raul Puri, Gretchen Krueger, Michael Petrov, Heidy Khlaaf, Girish Sastry, Pamela Mishkin, Brooke Chan, Scott Gray, and 39 others. 2021.
\newblock Evaluating large language models trained on code.
\newblock \emph{arXiv preprint arXiv:2107.03374}.

\bibitem[{Coleman et~al.(2020)Coleman, Yeh, Mussmann, Mirzasoleiman, Bailis, Liang, Leskovec, and Zaharia}]{coleman2020selection}
Cody Coleman, Christopher Yeh, Stephen Mussmann, Baharan Mirzasoleiman, Peter Bailis, Percy Liang, Jure Leskovec, and Matei Zaharia. 2020.
\newblock \href {https://arxiv.org/abs/1906.11829} {Selection via proxy: Efficient data selection for deep learning}.
\newblock In \emph{Proceedings of the International Conference on Learning Representations (ICLR)}.

\bibitem[{Dean et~al.(2012)Dean, Corrado, Monga, Chen, Devin, Mao, Ranzato, Senior, Tucker, Yang, Le, and Ng}]{dean2012large}
Jeffrey Dean, Greg Corrado, Rajat Monga, Kai Chen, Matthieu Devin, Mark Mao, Marc\textquotesingle~aurelio Ranzato, Andrew Senior, Paul Tucker, Ke~Yang, Quoc Le, and Andrew Ng. 2012.
\newblock Large scale distributed deep networks.
\newblock \emph{Proceedings of the Conference on Neural Information Processing Systems (NeurIPS)}, 25.

\bibitem[{Goodfellow et~al.(2016)Goodfellow, Bengio, and Courville}]{goodfellow2016deep}
Ian Goodfellow, Yoshua Bengio, and Aaron Courville. 2016.
\newblock \emph{Deep Learning}.
\newblock MIT Press.

\bibitem[{Gower et~al.(2019)Gower, Loizou, Qian, Sailanbayev, Shulgin, and Richt{\'a}rik}]{gower2019sgd}
Robert~Mansel Gower, Nicolas Loizou, Xun Qian, Alibek Sailanbayev, Egor Shulgin, and Peter Richt{\'a}rik. 2019.
\newblock Sgd: General analysis and improved rates.
\newblock In \emph{Proceedings of the International Conference on Machine Learning (ICML)}, pages 5200--5209. PMLR.

\bibitem[{Hendrycks et~al.(2021)Hendrycks, Burns, Basart, Zou, Mazeika, Song, and Steinhardt}]{hendrycks2021measuring}
Dan Hendrycks, Collin Burns, Steven Basart, Andy Zou, Mantas Mazeika, Dawn Song, and Jacob Steinhardt. 2021.
\newblock \href {https://openreview.net/forum?id=d7KBjmI3GmQ} {Measuring massive multitask language understanding}.
\newblock In \emph{Proceedings of the International Conference on Learning Representations (ICLR)}.

\bibitem[{Henning et~al.(2023)Henning, Beluch, Fraser, and Friedrich}]{henning2023survey}
Sophie Henning, William Beluch, Alexander Fraser, and Annemarie Friedrich. 2023.
\newblock A survey of methods for addressing class imbalance in deep-learning based natural language processing.
\newblock In \emph{Proceedings of the Conference of the European Chapter of the Association for Computational Linguistics (EACL)}, pages 523--540.

\bibitem[{Huang et~al.(2019)Huang, Cheng, Bapna, Firat, Chen, Chen, Lee, Ngiam, Le, Wu, and Chen}]{huang2019gpipe}
Yanping Huang, Youlong Cheng, Ankur Bapna, Orhan Firat, Dehao Chen, Mia Chen, HyoukJoong Lee, Jiquan Ngiam, Quoc~V Le, Yonghui Wu, and zhifeng Chen. 2019.
\newblock Gpipe: Efficient training of giant neural networks using pipeline parallelism.
\newblock \emph{Proceedings of the Conference on Neural Information Processing Systems (NeurIPS)}, 32.

\bibitem[{Jin et~al.(2021)Jin, Pan, Oufattole, Weng, Fang, and Szolovits}]{jin2021disease}
Di~Jin, Eileen Pan, Nassim Oufattole, Wei-Hung Weng, Hanyi Fang, and Peter Szolovits. 2021.
\newblock What disease does this patient have? a large-scale open domain question answering dataset from medical exams.
\newblock \emph{Applied Sciences}, 11(14):6421.

\bibitem[{Jin et~al.(2019)Jin, Dhingra, Liu, Cohen, and Lu}]{jin2019pubmedqa}
Qiao Jin, Bhuwan Dhingra, Zhengping Liu, William Cohen, and Xinghua Lu. 2019.
\newblock Pubmedqa: A dataset for biomedical research question answering.
\newblock In \emph{Proceedings of the Conference on Empirical Methods in Natural Language Processing and the 9th International Joint Conference on Natural Language Processing (EMNLP-IJCNLP)}, pages 2567--2577.

\bibitem[{Katharopoulos and Fleuret(2018)}]{katharopoulos2018not}
Angelos Katharopoulos and François Fleuret. 2018.
\newblock \href {https://arxiv.org/abs/1803.00942} {Not all samples are created equal: Deep learning with importance sampling}.
\newblock In \emph{Proceedings of the International Conference on Machine Learning (ICML)}, pages 2525--2534.

\bibitem[{Li et~al.(2024{\natexlab{a}})Li, Zhang, He, Li, Zhao, Wang, Cheng, and Zhou}]{li2024superfiltering}
Ming Li, Yong Zhang, Shwai He, Zhitao Li, Hongyu Zhao, Jianzong Wang, Ning Cheng, and Tianyi Zhou. 2024{\natexlab{a}}.
\newblock Superfiltering: Weak-to-strong data filtering for fast instruction-tuning.
\newblock In \emph{Proceedings of the Annual Meeting of the Association for Computational Linguistics (ACL)}, pages 14255--14273.

\bibitem[{Li et~al.(2024{\natexlab{b}})Li, Zhang, Li, Chen, Chen, Cheng, Wang, Zhou, and Xiao}]{li2024quantity}
Ming Li, Yong Zhang, Zhitao Li, Jiuhai Chen, Lichang Chen, Ning Cheng, Jianzong Wang, Tianyi Zhou, and Jing Xiao. 2024{\natexlab{b}}.
\newblock From quantity to quality: Boosting llm performance with self-guided data selection for instruction tuning.
\newblock In \emph{Proceedings of the North American Chapter of the Association for Computational Linguistics (NAACL)}, pages 7602--7635.

\bibitem[{Li et~al.(2014)Li, Andersen, Park, Smola, Ahmed, Josifovski, Long, Shekita, and Su}]{li2014parameter}
Mu~Li, David~G Andersen, Jun~Woo Park, Alexander~J Smola, Amr Ahmed, Vanja Josifovski, James Long, Eugene~J Shekita, and Bor-Yiing Su. 2014.
\newblock Scaling distributed machine learning with the parameter server.
\newblock In \emph{Proceedings of the USENIX Symposium on Operating Systems Design and Implementation (OSDI)}, pages 583--598. USENIX Association.

\bibitem[{Liu et~al.(2023)Liu, Xia, Wang, and Zhang}]{liu2023your}
Jiawei Liu, Chunqiu~Steven Xia, Yuyao Wang, and Lingming Zhang. 2023.
\newblock Is your code generated by chatgpt really correct? rigorous evaluation of large language models for code generation.
\newblock \emph{Proceedings of the Conference on Neural Information Processing Systems (NeurIPS)}, 36:21558--21572.

\bibitem[{Liu et~al.(2024)Liu, Zeng, He, Jiang, and He}]{liu2023makes}
Wei Liu, Weihao Zeng, Keqing He, Yong Jiang, and Junxian He. 2024.
\newblock \href {https://openreview.net/forum?id=BTKAeLqLMw} {What makes good data for alignment? a comprehensive study of automatic data selection in instruction tuning}.
\newblock In \emph{The Twelfth International Conference on Learning Representations}.

\bibitem[{Lloyd(1982)}]{lloyd1982least}
Stuart Lloyd. 1982.
\newblock Least squares quantization in pcm.
\newblock \emph{IEEE Transactions on Information Theory}, 28(2):129--137.

\bibitem[{Loshchilov and Hutter(2016)}]{loshchilov2016online}
Ilya Loshchilov and Frank Hutter. 2016.
\newblock \href {https://arxiv.org/abs/1511.06343} {Online batch selection for faster training of neural networks}.
\newblock In \emph{ICLR Workshop}.

\bibitem[{Luo et~al.(2024)Luo, Xu, Zhao, Sun, Geng, Hu, Tao, Ma, Lin, and Jiang}]{luo2024wizardcoder}
Ziyang Luo, Can Xu, Pu~Zhao, Qingfeng Sun, Xiubo Geng, Wenxiang Hu, Chongyang Tao, Jing Ma, Qingwei Lin, and Daxin Jiang. 2024.
\newblock \href {https://openreview.net/forum?id=UnUwSIgK5W} {Wizardcoder: Empowering code large language models with evol-instruct}.
\newblock In \emph{Proceedings of the International Conference on Learning Representations (ICLR)}.

\bibitem[{Mindermann et~al.(2022)Mindermann, Brauner, Razzak, Sharma, Kirsch, Xu, Höltgen, Gomez, Morisot, Farquhar, and Gal}]{mindermann2022prioritized}
Sören Mindermann, Jan Brauner, Muhammed Razzak, Mrinank Sharma, Andreas Kirsch, Winnie Xu, Benedikt Höltgen, Aidan~N. Gomez, Adrien Morisot, Sebastian Farquhar, and Yarin Gal. 2022.
\newblock Prioritized training on points that are learnable, worth learning, and not yet learnt.
\newblock In \emph{Proceedings of the International Conference on Machine Learning (ICML)}, pages 15630--15649. PMLR.

\bibitem[{Pal et~al.(2022)Pal, Umapathi, and Sankarasubbu}]{pal2022medmcqa}
Ankit Pal, Logesh~Kumar Umapathi, and Malaikannan Sankarasubbu. 2022.
\newblock Medmcqa: A large-scale multi-subject multi-choice dataset for medical domain question answering.
\newblock In \emph{Proceedings of the Conference on Health, Inference, and Learning}, pages 248--260. PMLR.

\bibitem[{Rajbhandari et~al.(2020)Rajbhandari, Rasley, Ruwase, and He}]{rajbhandari2020zero}
Samyam Rajbhandari, Jeff Rasley, Olatunji Ruwase, and Yuxiong He. 2020.
\newblock Zero: Memory optimizations toward training trillion parameter models.
\newblock In \emph{Proceedings of the International Conference for High Performance Computing, Networking, Storage and Analysis (SC)}, pages 1--16. IEEE.

\bibitem[{Rozière et~al.(2023)Rozière, Gehring, Gloeckle, Sootla, Gat, Tan, Adi, Liu, Sauvestre, Remez, Rapin, Kozhevnikov, Evtimov, Bitton, Bhatt, Ferrer, Grattafiori, Xiong, Défossez, Copet, Azhar, Touvron, Martin, Usunier, Scialom, and Synnaeve}]{roziere2023code}
Baptiste Rozière, Jonas Gehring, Fabian Gloeckle, Sten Sootla, Itai Gat, Xiaoqing~Ellen Tan, Yossi Adi, Jingyu Liu, Romain Sauvestre, Tal Remez, Jérémy Rapin, Artyom Kozhevnikov, Ivan Evtimov, Joanna Bitton, Manish Bhatt, Cristian~Canton Ferrer, Aaron Grattafiori, Wenhan Xiong, Alexandre Défossez, and 7 others. 2023.
\newblock Code llama: Open foundation models for code.
\newblock \emph{arXiv preprint arXiv:2308.12950}.

\bibitem[{Sener and Savarese(2018)}]{sener2018active}
Ozan Sener and Silvio Savarese. 2018.
\newblock \href {https://arxiv.org/abs/1708.00489} {Active learning for convolutional neural networks: A core-set approach}.
\newblock In \emph{Proceedings of the International Conference on Learning Representations (ICLR)}.

\bibitem[{Shazeer and Stern(2018)}]{shazeer2018adafactor}
Noam Shazeer and Mitchell Stern. 2018.
\newblock Adafactor: Adaptive learning rates with sublinear memory cost.
\newblock In \emph{Proceedings of the International Conference on Machine Learning (ICML)}, pages 4596--4604. PMLR.

\bibitem[{Shoeybi et~al.(2019)Shoeybi, Patwary, Puri, LeGresley, Casper, and Catanzaro}]{shoeybi2019megatron}
Mohammad Shoeybi, Mostofa Patwary, Raul Puri, Patrick LeGresley, Jared Casper, and Bryan Catanzaro. 2019.
\newblock Megatron-lm: Training multi-billion parameter language models using model parallelism.
\newblock \emph{arXiv preprint arXiv:1909.08053}.

\bibitem[{Touvron et~al.(2023)Touvron, Martin, Stone, Albert, Almahairi, Babaei, Bashlykov, Batra, Bhargava, Bhosale, Bikel, Blecher, Ferrer, Chen, Cucurull, Esiobu, Fernandes, Fu, Fu, Fuller, Gao, Goswami, Goyal, Hartshorn, Hosseini, Hou, Inan, Kardas, Kerkez, Khabsa, Kloumann, Korenev, Koura, Lachaux, Lavril, Lee, Liskovich, Lu, Mao, Martinet, Mihaylov, Mishra, Molybog, Nie, Poulton, Reizenstein, Rungta, Saladi, Schelten, Silva, Smith, Subramanian, Tan, Tang, Taylor, Williams, Kuan, Xu, Yan, Zarov, Zhang, Fan, Kambadur, Narang, Rodriguez, Stojnic, Edunov, and Scialom}]{touvron2023llama}
Hugo Touvron, Louis Martin, Kevin Stone, Peter Albert, Amjad Almahairi, Yasmine Babaei, Nikolay Bashlykov, Soumya Batra, Prajjwal Bhargava, Shruti Bhosale, Dan Bikel, Lukas Blecher, Cristian~Canton Ferrer, Moya Chen, Guillem Cucurull, David Esiobu, Jude Fernandes, Jeremy Fu, Wenyin Fu, and 49 others. 2023.
\newblock Llama 2: Open foundation and fine-tuned chat models.
\newblock \emph{arXiv preprint arXiv:2307.09288}.

\bibitem[{Vaswani et~al.(2017)Vaswani, Shazeer, Parmar, Uszkoreit, Jones, Gomez, Kaiser, and Polosukhin}]{vaswani2017attention}
Ashish Vaswani, Noam Shazeer, Niki Parmar, Jakob Uszkoreit, Llion Jones, Aidan~N Gomez, {\L}ukasz Kaiser, and Illia Polosukhin. 2017.
\newblock Attention is all you need.
\newblock \emph{Proceedings of the Conference on Neural Information Processing Systems (NeurIPS)}, 30.

\bibitem[{Wei et~al.(2022)Wei, Bosma, Zhao, Guu, Yu, Lester, Du, Dai, and Le}]{wei2021finetuned}
Jason Wei, Maarten Bosma, Vincent Zhao, Kelvin Guu, Adams~Wei Yu, Brian Lester, Nan Du, Andrew~M. Dai, and Quoc~V Le. 2022.
\newblock \href {https://openreview.net/forum?id=gEZrGCozdqR} {Finetuned language models are zero-shot learners}.
\newblock In \emph{Proceedings of the International Conference on Learning Representations (ICLR)}.

\bibitem[{Wei et~al.(2024)Wei, Wang, Liu, Ding, and Zhang}]{wei2024magicoder}
Yuxiang Wei, Zhe Wang, Jiawei Liu, Yifeng Ding, and Lingming Zhang. 2024.
\newblock Magicoder: empowering code generation with oss-instruct.
\newblock In \emph{Proceedings of the International Conference on Machine Learning (ICML)}, pages 52632--52657.

\bibitem[{Xia et~al.(2024)Xia, Malladi, Gururangan, Arora, and Chen}]{xia2024less}
Mengzhou Xia, Sadhika Malladi, Suchin Gururangan, Sanjeev Arora, and Danqi Chen. 2024.
\newblock Less: selecting influential data for targeted instruction tuning.
\newblock In \emph{Proceedings of the International Conference on Machine Learning (ICML)}, pages 54104--54132.

\bibitem[{Yang et~al.(2024)Yang, Mishra, Chiang, and Mirzasoleiman}]{yang2024smalltolarge}
Yu~Yang, Siddhartha Mishra, Jeffrey Chiang, and Baharan Mirzasoleiman. 2024.
\newblock Smalltolarge (s2l): Scalable data selection for fine-tuning large language models by summarizing training trajectories of small models.
\newblock \emph{Proceedings of the Conference on Neural Information Processing Systems (NeurIPS)}, 37:83465--83496.

\bibitem[{Yin et~al.(2018)Yin, Pananjady, Lam, Papailiopoulos, Ramchandran, and Bartlett}]{yin2018gradient}
Dong Yin, Ashwin Pananjady, Max Lam, Dimitris Papailiopoulos, Kannan Ramchandran, and Peter Bartlett. 2018.
\newblock Gradient diversity: a key ingredient for scalable distributed learning.
\newblock In \emph{Proceedings of the International Conference on Artificial Intelligence and Statistics (AISTATS)}, pages 1998--2007. PMLR.

\bibitem[{Zhang et~al.(2023)Zhang, Tian, Yang, Chen, Li, and Petzold}]{zhang2023alpacare}
Xinlu Zhang, Chenxin Tian, Xianjun Yang, Lichang Chen, Zekun Li, and Linda~Ruth Petzold. 2023.
\newblock Alpacare: Instruction-tuned large language models for medical application.
\newblock \emph{arXiv preprint arXiv:2310.14558}.

\bibitem[{Zhao et~al.(2023)Zhao, Gu, Varma, Luo, Huang, Xu, Wright, Shojanazeri, Ott, Shleifer, Desmaison, Balioglu, Damania, Nguyen, Chauhan, Hao, Mathews, and Li}]{zhao2023pytorch}
Yanli Zhao, Andrew Gu, Rohan Varma, Liang Luo, Chien-Chin Huang, Min Xu, Less Wright, Hamid Shojanazeri, Myle Ott, Sam Shleifer, Alban Desmaison, Can Balioglu, Pritam Damania, Bernard Nguyen, Geeta Chauhan, Yuchen Hao, Ajit Mathews, and Shen Li. 2023.
\newblock \href {https://doi.org/10.14778/3611540.3611569} {Pytorch fsdp: Experiences on scaling fully sharded data parallel}.
\newblock \emph{Proc. VLDB Endow.}, 16(12):3848–3860.

\end{thebibliography}

\appendix

\appendix

\clearpage
\section{Proofs for Section~\ref{sec:sampling-variance}}
\label{app:3_proofs}

\subsection{Notations}
\label{app:notation}

For clarity, we summarize the main notations used in the theoretical analysis. 
We assume that the dataset consists of $N$ samples partitioned into $K$ clusters of equal size $n$. 
Each sample $i$ is associated with a gradient vector $g_i \in \mathbb{R}^d$. 
The global mean gradient is denoted by $\mu$, while $\mu_c$ represents the mean gradient of cluster $c$. 
The intra-cluster variance $S_c^2$ measures the variability of gradients within cluster $\mathcal{C}_c$. 
Table~\ref{tab:notation} summarizes these symbols and their descriptions.

\subsection{Intra- and Inter-Cluster Decomposition}
\label{app:prop1}
Let $\mu=\tfrac{1}{N}\sum_{i=1}^N g_i$ and $\mu_c=\tfrac{1}{N_c}\sum_{i\in \mathcal{C}_c} g_i$.
Then the total sum of squares splits into an \emph{intra-cluster} part and an
\emph{inter-cluster} part:
% \[
% \sum_{i=1}^N (g_i-\mu)^2
% =\sum_{k=1}^K \sum_{i\in k} (g_{k,i}-\mu_k)^2
% +\sum_{k=1}^K N_k(\mu_k-\mu)^2.
% \]
% \[
% \begin{aligned}
% \sum_{i=1}^N (g_i-\mu)^2
% &= \sum_{c=1}^K \sum_{i\in \mathcal{C}_c} (g_{c,i}-\mu_c)^2 \\
% &\quad + \sum_{c=1}^K N_c(\mu_c-\mu)^2 .
% \end{aligned}
% \]
\[
\begin{aligned}
\sum_{i=1}^N \left\| g_i-\mu \right\|_2^2
&=
\sum_{c=1}^K \sum_{i\in \mathcal{C}_c}
\left\| g_{c,i}-\mu_c \right\|_2^2 \\
&\quad +
\sum_{c=1}^K N_c
\left\| \mu_c-\mu \right\|_2^2 .
\end{aligned}
\]
Dividing by $N$ gives $S^2=S_{\text{intra}}^2+S_{\text{inter}}^2$; for equal $N_c$,
$S_{\text{intra}}^2=\tfrac{1}{K}\sum_c S_c^2$ and $S_{\text{inter}}^2=\tfrac{1}{K}\sum_{c=1}^{K}\|\mu_c-\mu\|_2^2$.
% $S_{\text{inter}}^2=\tfrac{1}{K}\sum_c(\mu_c-\mu)^2$.

\subsection{Proof of Proposition~\ref{prop:srs}}
\label{app:proofsrs}
Under simple random sampling without replacement, 
we uniformly select $B$ distinct samples from the $N$ total data points ($B<N$). 
Let $I_i \in \{0,1\}$ be an indicator variable denoting whether the $i$-th sample is selected. 
Then we have
\[
\begin{gathered}
\mathbb{E}[I_i] = \tfrac{B}{N}, \\
\mathrm{Var}(I_i) = \tfrac{B}{N}\Big(1-\tfrac{B}{N}\Big), \\
\mathrm{Cov}(I_i, I_j) = -\tfrac{B(N-B)}{N^2(N-1)}.
\end{gathered}
\]
Using $\hat G = \tfrac{1}{B}\sum_{i=1}^N I_i g_i$, we have
% \begin{align}
% \mathrm{Var}(\hat G)
% &= \tfrac{1}{B^2}\,\mathrm{Var}\!\Big(\sum_{i=1}^N I_i g_i\Big)
% = \tfrac{N-B}{BN}\,\tfrac{N}{N-1}\,S^2 \\
% \approx \tfrac{1-f}{B}\,S^2 \quad (\frac{N}{N-1} \approx 1 \text{ for large } N).
% \end{align}
\begin{align}
\mathrm{Var}(\hat G)
&= \tfrac{1}{B^2}\,\mathrm{Var}\!\Big(\sum_{i=1}^N I_i g_i\Big) \\
&= \tfrac{N-B}{BN}\,\tfrac{N}{N-1}\,S^2 \\
&\approx \tfrac{1-f}{B}\,S^2 \left(\tfrac{N}{N-1} \approx 1 \text{ for large } N\right).
\end{align}
Substituting the variance decomposition $S^2 = S_{\text{intra}}^2 + S_{\text{inter}}^2$ yields
\begin{equation}
\label{eq:var-random}
\mathrm{Var}(\hat G_{\mathrm{Random}}) 
= \tfrac{1-f}{B}\big(S_{\text{intra}}^2 + S_{\text{inter}}^2\big).
\end{equation}

\begin{table}[t]
\caption{Summary of notation used in the theoretical analysis. }
\label{tab:notation}
\resizebox{\linewidth}{!}{
\begin{tabular}{ll}
\toprule
Symbol & Description \\
\midrule
$N$ & Total number of samples in the dataset \\
$K$ & Number of clusters \\
$N_c$ & Number of samples in cluster $\mathcal{C}_c$ ($N = Kn$) \\
$n$ & Number of samples in balanced cluster ($N = Kn$) \\
$\mathcal{C}_c$ & Set of samples belonging to $c$-th cluster \\
$g_i \in \mathbb{R}^d$ & Gradient of sample $i$ \\
$\mu$ & Population mean gradient $\frac{1}{N}\sum_{i=1}^{N} g_i$ \\
$\mu_c$ & Mean gradient of cluster $c$ \\
$S_c^2$ & Intra-cluster gradient variance \\
\bottomrule
\end{tabular}
}
\end{table}

\paragraph{Derivation.}
Using $\hat G = \frac{1}{B}\sum_{i=1}^N I_i g_i$, where 
$\mathbb{E}[I_i] = \frac{B}{N}$,
$\mathrm{Var}(I_i) = \frac{B}{N}(1-\frac{B}{N})$, and
$\mathrm{Cov}(I_i, I_j) = -\frac{B(N-B)}{N^2(N-1)}$ for $i\neq j$, we have
% \begin{aligned}
% \mathrm{Var}(\hat G)
% &= \frac{1}{B^2}\mathrm{Var}\Big(\sum_{i=1}^N I_i g_i\Big) \\
% &= \frac{1}{B^2}\Big[
%    \sum_{i=1}^N g_i^2 \mathrm{Var}(I_i)
%    + 2\sum_{i<j} g_i g_j \mathrm{Cov}(I_i, I_j)
%    \Big].
% \end{aligned}
\begin{equation}
\begin{aligned}
\mathrm{Var}(\hat G)
&= \frac{1}{B^2}\mathrm{Var}\!\left(\sum_{i=1}^N I_i g_i\right) \\
&= \frac{1}{B^2}\Bigg(
   \sum_{i=1}^N g_i^2 \mathrm{Var}(I_i) \\
&\qquad + 2\sum_{i<j} g_i g_j \mathrm{Cov}(I_i, I_j)
   \Bigg).
\end{aligned}
\end{equation}
Let $\mu = \frac{1}{N}\sum_{i=1}^N g_i$ and 
$S^2 = \frac{1}{N}\sum_{i=1}^N (g_i - \mu)^2$. 
Using the identity $\sum_{i<j} g_i g_j 
= \frac{1}{2}\big((\sum_{i} g_i)^2 - \sum_{i} g_i^2\big)$,
we simplify the expression to
\begin{align*}
\mathrm{Var}(\hat G)
&= \frac{N-B}{BN}\frac{N}{N-1} S^2
= \frac{1-f}{B}\frac{N}{N-1} S^2,
\end{align*}
where $f = \frac{B}{N}$ is the sampling fraction.
For a large population ($N \gg 1$),
$\frac{N}{N-1} \approx 1$, yielding ~\eqref{eq:var-random}.

\subsection{Proof of Proposition~\ref{prop:strat}}
\label{app:proofstrat}
Consider uniform sampling, in which we draw $b = B/K$ samples per cluster and compute the weighted average of gradients. 
Let $W_c = N_c / N = 1/K$. 
Then the gradient estimator is
\[
\hat G_{\mathrm{uniform}} = \sum_{c=1}^K W_c\,\bar g_c,
\qquad 
\bar g_c = \frac{1}{b} \sum_{i \in \mathcal{C}_c} g_i.
\]
Since sampling across clusters is independent,
\[
\mathrm{Var}\!\Big(\sum_{c=1}^K W_c\,\bar g_c\Big)
= \sum_{c=1}^K W_c^2\,\mathrm{Var}(\bar g_c).
\]
Within each cluster, sampling $b$ items without replacement gives 
$\mathrm{Var}(\bar g_c) = (1-f_c)\,S_c^2/b$ with $f_c = b/N_c = f$. 
Substituting $W_c = 1/K$ and $b = B/K$ yields
\begin{align}
\mathrm{Var}(\hat G_{\text{uniform}})
&= \sum_{c=1}^K \frac{W_c^2}{b}\,(1-f_c)\,S_c^2 \nonumber\\[-4pt]
&= \frac{1-f}{B}\cdot \frac{1}{K}\sum_{c=1}^K S_c^2
= \frac{1-f}{B}\,S_{\text{intra}}^2.
\label{eq:var-strat}
\end{align}

\subsection{Proof of Proposition~\ref{prop:comparison}}
\label{app:prop3}
From Proposition~\ref{prop:srs}, the variance of the random sampling estimator is
\[
\mathrm{Var}(\hat G_{\mathrm{Random}})
=
\frac{1-f}{B}\left(S_{\text{intra}}^2 + S_{\text{inter}}^2\right).
\]
From Proposition~\ref{prop:strat}, the variance of the uniform sampling estimator is
\[
\mathrm{Var}(\hat G_{\mathrm{strat}})
=
\frac{1-f}{B}S_{\text{intra}}^2.
\]
Taking the difference between the two expressions yields
\[
\mathrm{Var}(\hat G_{\mathrm{Random}})
-
\mathrm{Var}(\hat G_{\mathrm{strat}})
=
\frac{1-f}{B}S_{\text{inter}}^2.
\]
Since \(S_{\text{inter}}^2 \ge 0\), it follows that
\[
\mathrm{Var}(\hat G_{\mathrm{strat}})
\le
\mathrm{Var}(\hat G_{\mathrm{Random}}).
\]
Moreover, the inequality is strict whenever \(S_{\text{inter}}^2 > 0\), and equality holds if and only if \(S_{\text{inter}}^2 = 0\).

However, uniform stratification requires pre-balanced clusters and enforces equal sampling weights, which distorts the original data distribution and may increase sampling redundancy.
Although theoretically optimal under a balanced partition, it is difficult to apply in real-world heterogeneous datasets.

\section{Gradient Diversity Analysis under Distance-Based Selection}
\label{appendix:prune_strategy}

To understand the effect of within-cluster selection on optimization, we analyze 
how distance-based selection influences the gradient diversity of each 
mini-batch. 
Following \citep{yin2018gradient}, the gradient diversity of a mini-batch 
$\mathcal{S}$ is defined as
\begin{equation}
    \Delta(\mathcal{S})
    =
    \frac{\sum_{i\in\mathcal{S}} \| g_i \|^2}
         {\left\| \sum_{i\in\mathcal{S}} g_i \right\|^2},
    \label{eq:grad-div}
\end{equation}
which measures directional variability among gradients. 
Expanding the denominator yields
\begin{equation}
    \left\| \sum_{i\in\mathcal{S}} g_i \right\|^2
    =
    \sum_{i}\|g_i\|^2
    \;+\;
    2\sum_{i<j} \langle g_i, g_j \rangle,
\end{equation}
so that $\Delta(\mathcal{S})$ decreases when the pairwise inner products 
$\langle g_i, g_j\rangle$ increase.  
Thus, batches with more coherent gradient directions exhibit lower diversity.
Equivalently, because
\[
    \langle g_i, g_j\rangle 
    = 
    \|g_i\|\,\|g_j\|\cos\theta_{ij},
\]
smaller cosine values (i.e., larger mutual angles) reduce the denominator of 
$\Delta(\mathcal{S})$ and therefore \emph{increase} the gradient diversity.

\paragraph{Gradient Decomposition.}
Within a cluster $c$, we model each gradient as
\begin{equation}
g_i = \bar{\mu}_c + \varepsilon_i,
\end{equation}
where $\bar{\mu}_c \in \mathbb{R}^d$ is the cluster mean
and $\varepsilon_i \sim \mathcal{N}(0,\sigma^2 I)$
denotes isotropic residual noise.
We assume $\|\varepsilon_i\| \ll \|\bar{\mu}_c\|$.

\paragraph{Norm Expansion.}
Using a first-order Taylor expansion,
\begin{align}
\|g_i\|
&=
\|\bar{\mu}_c + \varepsilon_i\| \\
&=
\|\bar{\mu}_c\|
\left(
1 +
\frac{\bar{\mu}_c^\top \varepsilon_i}{\|\bar{\mu}_c\|^2}
\right)
+ O(\|\varepsilon_i\|^2).
\end{align}
Taking the reciprocal,
\begin{equation}
\frac{1}{\|g_i\|}
\approx
\frac{1}{\|\bar{\mu}_c\|}
\left(
1 -
\frac{\bar{\mu}_c^\top \varepsilon_i}{\|\bar{\mu}_c\|^2}
\right).
\end{equation}

\paragraph{Normalized Gradient Approximation.}
The normalized gradient $u_i = g_i / \|g_i\|$ becomes
\begin{equation}
u_i
\approx
\frac{\bar{\mu}_c}{\|\bar{\mu}_c\|}
+
\frac{1}{\|\bar{\mu}_c\|}
\left(
\varepsilon_i
-
\frac{\bar{\mu}_c^\top \varepsilon_i}{\|\bar{\mu}_c\|^2}
\bar{\mu}_c
\right).
\end{equation}
Define $v = \bar{\mu}_c / \|\bar{\mu}_c\|$.
Then $u_i \approx v + \delta_i$, where
\[
\delta_i
=
\frac{1}{\|\bar{\mu}_c\|}
\left(
\varepsilon_i
-
(\varepsilon_i^\top v) v
\right).
\]
Note that $\delta_i$ lies in the $(d-1)$-dimensional subspace orthogonal to $v$.

\paragraph{Inner Product Expectation.}
We now compute
\[
\mathbb{E}[\langle u_i, u_j \rangle]
=
\mathbb{E}[\langle v + \delta_i,\, v + \delta_j \rangle].
\]
Expanding,
\[
\langle u_i, u_j \rangle
=
1
+
\langle v, \delta_j \rangle
+
\langle \delta_i, v \rangle
+
\langle \delta_i, \delta_j \rangle.
\]
Since $\mathbb{E}[\delta_i]=0$ and
$\mathbb{E}[\delta_i^\top v]=0$,
the linear terms vanish in expectation.
Thus,
\[
\mathbb{E}[\langle u_i, u_j \rangle]
=
1
+
\mathbb{E}[\langle \delta_i, \delta_j \rangle].
\]
Using independence and isotropy,
\[
\mathbb{E}[\langle \delta_i, \delta_j \rangle]
=
- \frac{(d-1)\sigma^2}{\|\bar{\mu}_c\|^2}.
\]
Therefore,
\begin{equation}
\mathbb{E}[\langle u_i, u_j \rangle]
\approx
1
-
\frac{(d-1)\sigma^2}{\|\bar{\mu}_c\|^2}
+ O(\sigma^4),
\end{equation}
which completes the derivation.

\section{Details of the Weighted Gradient Update}
\label{appendix:weight-derivation}
\begin{table}[t]
\centering
\caption{Centroid similarity between full clusters and selected subsets obtained from Step 2 (data selection) of our method for \textrm{CodeLlama-7B} fine-tuned on \textrm{Magicoder-OSS-Instruct-75K}. The selection ratio is $1.00$.}
\label{tab:centroid_preservation}
\resizebox{\columnwidth}{!}{
\begin{NiceTabular}{c c c c}
\toprule
Cluster & $|\mathcal{C}_c|$ & Cosine Similarity & $\ell_2$ Distance \\
\midrule
0 & 18216 & 0.9997 & 0.33 \\
1 & 21283 & 0.9993 & 0.46 \\
2 & 14901 & 1.0000 & 0.00 \\
3 & 20797 & 0.9996 & 0.29 \\
\bottomrule
\end{NiceTabular}
}
\end{table}
\begin{algorithm*}[t]
\caption{Algorithm of \algname{}}
\label{alg:casdp}
    \begin{algorithmic}[1]
    \Require Dataset $\mathcal{D}$, number of GPUs $K$, selection ratio $r$, cluster capacity $\alpha$, mini-batch size $B$, number of training steps $T$
    \State Compute the proposed gradient-proxy embeddings $\{ g(x) \mid x \in \mathcal{D} \}$ using BADGE
    % \State Run $K$-means on $\{ g(x) \}$ to obtain clusters $\{ \mathcal{C}_1, \ldots, \mathcal{C}_K \}$
    \State Run capacity-constrained $K$-means on $\{ g(x) \}$ with $|\mathcal{C}_c| \le \frac{\alpha|\mathcal{D}|}{K}$ to obtain clusters $\{ \mathcal{C}_1, \ldots, \mathcal{C}_K \}$
    \State Let $n_c = |\mathcal{C}_c|$ for $c = 1,\ldots,K$ and $N = \sum_{c=1}^K n_c$
    \State Compute the smallest cluster size $n_{\min} = \min_{c} n_c$
    \For{$c = 1, \ldots, K$} \Comment{balanced peripheral selection}
        \State Compute cluster centroid $\mu_c = \frac{1}{n_c} \sum_{x \in \mathcal{C}_c} g(x)$
        \State For all $x \in \mathcal{C}_c$, compute distance $d(x) = \| g(x) - \mu_c \|_2$
        \State Sort $\mathcal{C}_c$ in descending order of $d(x)$ \Comment{farthest samples first}
        \State Define balanced subset $\tilde{\mathcal{C}}_c$ as the top $n_{\min}$ samples in the sorted list
        \State Define pruned subset $\hat{\mathcal{C}}_c$ as the top $\lfloor r n_{\min} \rfloor$ samples in $\tilde{\mathcal{C}}_c$
        \State Assign $\hat{\mathcal{C}}_c$ to GPU $c$
        \State Set cluster weight $w_c = \frac{n_c K}{N}$
    \EndFor
    \For{$t = 1, \ldots, T$} \Comment{data-parallel training}
        \For{$c = 1, \ldots, K$ \textbf{in parallel}}
            \State Sample a local mini-batch $\mathcal{P}_c \subseteq \hat{\mathcal{C}}_c$ with $|\mathcal{P}_c| = B / K$
            \State Compute local gradient 
            \[
                g_c = \frac{1}{|\mathcal{P}_c|} \sum_{x \in \mathcal{P}_c} \nabla_\theta \ell_x(\theta)
            \]
            \State Scale local gradient by its cluster weight: $\tilde{g}_c = w_c \, g_c$
        \EndFor
        \State Aggregate $\tilde{g} = \frac{1}{K} \sum_{c=1}^K \tilde{g}_c$ via All-Reduce
        \State Update model parameters: $\theta \leftarrow \theta - \eta \tilde{g}$
    \EndFor
    \end{algorithmic}
    \vspace*{-0.1cm}
\end{algorithm*}
\begin{table*}[t]
\centering
\caption{Statistics of the training datasets and evaluation benchmarks.}
\label{tab:data_statistics}
\resizebox{\textwidth}{!}{
\begin{tabular}{lll}
\toprule
\textbf{Artifact} & \textbf{Domain / Task} & \textbf{Statistics} \\
\midrule
\multicolumn{3}{l}{\textbf{Training datasets}} \\
Magicoder-OSS-Instruct-75K & Code instruction tuning & 75,197 training examples \\
Evol-Instruct-Code-80K & Code instruction tuning & 78,264 training examples \\
MedInstruct-52K & Medical instruction tuning & 52,002 training examples \\
\midrule
\multicolumn{3}{l}{\textbf{Evaluation benchmarks}} \\
HumanEval & Code generation & 164 test problems \\
HumanEval+ & Code generation & 164 test problems with extended test cases \\
MBPP & Code generation & 974 programming problems \\
MBPP+ & Code generation & MBPP-based evaluation with enhanced test cases \\
MedMCQA & Medical QA & 4,183 validation questions \\
MedQA & Medical QA & 1,273 test questions \\
PubMedQA & Biomedical QA & 1,000 expert-labeled test questions \\
MMLU medical subsets & Medical QA & 
\makecell[tl]{945 test questions across 5 medical subsets: \\ anatomy, clinical knowledge, college medicine,\\
medical genetics, professional medicine} \\
\bottomrule
\end{tabular}
}
\end{table*}

\subsection{Derivation of the Weighting Coefficient}

We compare the gradient estimation under random sampling and our weighted sampling scheme. 
Let the dataset be partitioned into $K$ clusters $\{\mathcal{C}_1,\dots,\mathcal{C}_K\}$ with sizes $n_c = |\mathcal{C}_c|$ and total size $N = \sum_{c=1}^K n_c$.
We denote by $\ell_i(\theta)$ the loss of sample $i$ and by $\nabla_\theta \ell_i(\theta)$ its gradient.

\paragraph{Random sampling with full dataset.}
Under standard random sampling from the full dataset, 
we construct a mini-batch $\mathcal{B} \subset \mathcal{D}$ of size $|\mathcal{B}| = B$ and estimate the gradient as
\begin{equation}
    g_{\mathrm{rand}}
    = \frac{1}{B} \sum_{i \in \mathcal{B}} \nabla_\theta \ell_i(\theta).
\end{equation}
Let $\mathcal{B}_c = \mathcal{B} \cap \mathcal{C}_c$ denote the subset of the mini-batch belonging to cluster $\mathcal{C}_c$.
In expectation, the number of samples drawn from cluster $\mathcal{C}_c$ satisfies
\begin{equation}
    \mathbb{E}\bigl[ |\mathcal{B}_c| \bigr]
    = B \, \frac{n_c}{N}.
\end{equation}

\paragraph{Balanced cluster-based sampling.}
In our cluster-based setting, we draw the same number of samples from each cluster.
We denote by $\mathcal{P}_c \subseteq \hat{\mathcal{C}}_c$ the subset sampled from cluster $c$, and set
\begin{equation}
    |\mathcal{P}_c| = \frac{B}{K}, \qquad c = 1,\dots,K.
\end{equation}
We then introduce a cluster-specific weighting coefficient $w_c$ to modulate each cluster’s contribution and define the weighted gradient estimator as
\begin{equation}
    \hat{g}
    = \frac{1}{B} \sum_{c=1}^{K} 
      w_c \sum_{i \in \mathcal{P}_c} \nabla_\theta \ell_i(\theta).
\end{equation}

\paragraph{Deriving the weighting coefficient.}
Assuming that the expected gradient direction within each cluster is preserved across sampling schemes, 
we can compare the relative contribution of cluster $c$ under random and balanced sampling.
Under random sampling, the expected fraction of the mini-batch drawn from cluster $c$ is
\begin{equation}
    \frac{\mathbb{E}[|\mathcal{B}_c|]}{B} = \frac{n_c}{N}.
\end{equation}
Under balanced sampling with weighting, the effective fraction contributed by cluster $c$ is
\begin{equation}
    \frac{w_c |\mathcal{P}_c|}{B} = \frac{w_c}{K}.
\end{equation}
Equating these two fractions yields
\begin{equation}
    \frac{n_c}{N} = \frac{w_c}{K}
    \quad \Rightarrow \quad
    w_c = \frac{n_c K}{N}.
\end{equation}
Thus, the weighting coefficient for cluster $c$ is given by
\begin{equation}
    \label{eq:cluster_weight_clean}
    w_c = \frac{n_c K}{N}.
\end{equation}

Hence, the weighting term compensates for the difference in cluster sizes, preserving the original data distribution while maintaining balanced sampling across clusters.
More detailed empirical validation is in Appendix~\ref{app:empirical_validation}.

\subsection{Empirical Validation of the Centroid Preservation Assumption}
\label{app:empirical_validation}

The derivation above assumes that the expected gradient direction within each cluster is approximately preserved after data selection. 
Formally, we assume that the centroid of gradients computed from the selected subset remains close to the centroid computed from the full cluster.

To validate this assumption empirically, we compute the centroid of gradients for each cluster using the full dataset and compare it with the centroid computed from the pruned subset.
For cluster $c$, we define

\begin{equation}
\begin{aligned}
\mu_c^{\mathrm{all}} = \frac{1}{|\mathcal{C}_c|} \sum_{i \in \mathcal{C}_c} \nabla_\theta \ell_i(\theta), \\
\mu_c^{\mathrm{sel}} = \frac{1}{|\mathcal{P}_c|} \sum_{i \in \mathcal{P}_c} \nabla_\theta \ell_i(\theta).
\end{aligned}
\end{equation}

We measure the cosine similarity and $\ell_2$ distance between $\mu_c^{\mathrm{all}}$ and $\mu_c^{\mathrm{sel}}$ for each cluster.

As shown in Table~\ref{tab:centroid_preservation}, the cosine similarity between the centroids computed from the full clusters and the selected subsets is consistently above 0.999 across all clusters. 
This indicates that the gradient direction within each cluster is largely preserved after pruning.
This behavior can be attributed to the redundancy of samples located near the cluster centroid. 
Since many such samples contribute similar gradients, removing a subset of them has little impact on the overall cluster mean. 
Meanwhile, the selected samples still span the cluster structure, resulting in minimal centroid shift.

These results empirically support the assumption used in the derivation that the expected gradient direction within each cluster remains approximately unchanged across sampling schemes. 
Consequently, the weighting coefficient in Eq.~\eqref{eq:cluster_weight_clean} provides a reasonable approximation of the original gradient contribution from each cluster.
\section{Algorithm of \algname{}}
\label{appendix:alg}

\algname{} is summarized in Algorithm~\ref{alg:casdp}, which describes the cluster-aware balanced sampling and weighted data-parallel training.
\section{Details of Experiment Settings}
\label{app:exp_set_app}
\vspace{-0.2cm}
\subsection{Implementation Details}
\label{app:sub_impl_details}
For our experiments on Magicoder-OSS-Instruct-75K, conducted on four NVIDIA H100 GPUs, we fine-tune the same base model for 2 epochs using PyTorch DDP. In this setting, we adopt Adafactor~\citep{shazeer2018adafactor} with a learning rate of 5e-5, a linear learning rate scheduler, and 15 warm-up steps. %  and a warm-up ratio of 0.1.
The global batch size is set to 512 for all experiments.
For the GPU scalability experiment, we scale the training to eight NVIDIA H100 GPUs while keeping all other training configurations unchanged.

For experiments on the other code domain dataset, we employ CodeLlama-Python-7B~\citep{roziere2023code} as the base model.
For experiments on \textrm{Evol-Instruct-Code-80K}, conducted on four NVIDIA H200 GPUs, we fine-tune the model for 2 epochs using PyTorch FSDP. We use AdamW as optimizer with a learning rate of 5e-5, a linear learning rate scheduler, and a warm-up ratio of 0.05.

Medical domain dataset experiments use \textrm{Llama-2-7B} as the base model.
On \textrm{AlpaCare-MedInstruct-52K}, we fine-tune the model for 3 epochs on four H100 GPUs using PyTorch FSDP.
We adopt AdamW as the optimizer with a learning rate of 2e-5, a cosine learning rate scheduler, and a warm-up ratio of 0.03.
The per-device batch size is set to 4 with 8 gradient accumulation steps, and the maximum input length is 1024.

For the S2L baseline~\citep{yang2024smalltolarge}, we follow its small-to-large setting by using training trajectories from DeepSeek-Coder-1.3B-Instruct, trained for 2 epochs, as the small-model proxy in the code-domain experiments. S2L is evaluated under the same effective data budget as the other data selection baselines.
In the medical-domain experiments, we use \textrm{Llama-3.2-1B-Instruct} as the small model and train for 1 epoch to collect training trajectories. The small model is fine-tuned on the corresponding medical training set, and S2L is applied under the same effective data budget as the other data selection baselines.

In all experiments, we run each configuration with three different random seeds and report the mean and standard deviation of the results. We set the cluster capacity hyperparameter to $\alpha = 1.5$.

\subsection{Dataset Details}
We report dataset statistics in Table~\ref{tab:data_statistics}, including the number of training examples, effective data usage after cluster pruning and sampling, evaluation benchmarks, and repeated-run settings. 
For evaluation, we use the official benchmark splits when available.

% \subsection{Licenses and Terms of Use.}
% We use pretrained models, training datasets, and evaluation benchmarks in accordance with their respective licenses and terms of use. 
% The training datasets are used only for research on data selection and instruction tuning, and the benchmark datasets are used only for evaluation. 
% The pretrained models are used for research-purpose fine-tuning and evaluation under their original access conditions. 
% We do not redistribute the original datasets, benchmark data, or pretrained model weights. 
% Any derived artifacts created in this work, such as selected subsets and gradient-based representations, are intended solely for research, reproduction, and analysis.

\section{Details of Experiment Results}
\label{app:exp_app}

\subsection{Experiment Result on Additional Dataset}

\begin{table}[t]
\centering
\caption{
Pass@1 of sampling strategies for \textrm{CodeLlama-7B} on \textrm{Evol-Instruct-Code-80K}. 
The 68\%, 51\%, and 35\% correspond to pruning ratios, $r = 1.00$, $0.75$, and $0.50$, respectively.
% Pass@1 is evaluated with greedy decoding.
% Pass@1\,(\%) and training time for data sampling strategies on \textit{CodeLlama-7B} fine-tuned with \textit{Evol-Instruct-Code-80K}.
% Cluster balancing keeps 68\% of the original dataset; applying intra-cluster selection ratios $r \in \{1.00, 0.75, 0.50\}$ results in effective data uses of 68\%, 51\%, and 35\% of the original dataset, respectively.
% Pass@1 is evaluated with greedy decoding.
% \textbf{Bold} indicates the best score at each data usage.
% \colorbox{black!10}{Gray} cells indicate reference values for performance and training-time comparison.
} %  and \underline{underline} marks the best overall score across all configurations
\label{tab:main_evol}
\resizebox{\linewidth}{!}{
\begin{tabular}{ll ccc}
\toprule
\multicolumn{2}{c}{Data Selection} & \multicolumn{3}{c}{Evaluation Benchmark} \\
\cmidrule(lr){1-2} \cmidrule(lr){3-5}
\textbf{Ratio} & \textbf{Sampling} & \textbf{HumanEval(+)} & \textbf{MBPP(+)} & \textbf{Average} \\
\midrule
\multirow{1}{*}{\makecell[l]{100\%}}
  & Random  & 48.6{\scriptsize $\pm$ 4.6} (41.9{\scriptsize $\pm$ 5.1}) & 64.5{\scriptsize $\pm$ 1.3} (53.3{\scriptsize $\pm$ 2.9}) & 52.1 \\
  % & Uniform & 43.3{\scriptsize $\pm$ 2.2} (37.4{\scriptsize $\pm$ 2.5}) & 33.0{\scriptsize $\pm$ 3.2} (28.1{\scriptsize $\pm$ 2.8}) & 35.5 & -31.9 & 1.02 & -24.1 \\
\midrule
\multirow{2}{*}{\makecell[l]{68\%}} %No redundancy
  & Random  & 46.7{\scriptsize $\pm$ 1.3} (39.2{\scriptsize $\pm$ 0.9}) & 45.7{\scriptsize $\pm$ 2.1} (38.4{\scriptsize $\pm$ 2.8}) & 42.5 \\
  % & Uniform & 42.9{\scriptsize $\pm$ 2.8} (36.6{\scriptsize $\pm$ 2.8}) & 39.8{\scriptsize $\pm$ 2.4} (33.2{\scriptsize $\pm$ 1.7}) & 38.1 & -26.9 & 0.70 & 14.6 \\
  % & IFD     & 47.0{\scriptsize $\pm$ 1.6} (40.3{\scriptsize $\pm$ 1.6}) & 33.9{\scriptsize $\pm$ 6.0} (27.9{\scriptsize $\pm$ 5.2}) & 37.2 & -28.6 & 0.55 & 32.9 \\
  % & LESS    & {\scriptsize $\pm$ } ({\scriptsize $\pm$ }) & {\scriptsize $\pm$ } ({\scriptsize $\pm$ }) &  &  & 0.56 & 31.2 \\
  & \cellcolor{clustergray}\textbf{\algname{}} 
             & \cellcolor{clustergray}\textbf{50.2{\scriptsize $\pm$ 0.9} (44.5{\scriptsize $\pm$ 0.0})} 
             & \cellcolor{clustergray}\textbf{62.5{\scriptsize $\pm$ 1.6} (52.1{\scriptsize $\pm$ 1.7})} 
             & \cellcolor{clustergray}\textbf{52.3} \\
\midrule
\multirow{2}{*}{\makecell[l]{51\%}} 
  & Random  & 45.3{\scriptsize $\pm$ 3.4} (39.2{\scriptsize $\pm$ 2.8}) & 47.7{\scriptsize $\pm$ 5.7} (39.0{\scriptsize $\pm$ 4.7}) & 42.6 \\
  % & Uniform & 35.4{\scriptsize $\pm$ 5.4} (31.5{\scriptsize $\pm$ 4.9}) & 40.6{\scriptsize $\pm$ 0.9} (33.4{\scriptsize $\pm$ 1.3}) & 35.2 & -32.4 & 0.52 & 36.6 \\
  % & IFD     & 45.1{\scriptsize $\pm$ 1.6} (38.6{\scriptsize $\pm$ 0.9}) & 34.5{\scriptsize $\pm$ 2.2} (28.9{\scriptsize $\pm$ 2.0}) & 36.8 & -29.4 & 0.42 & 48.8 \\
  % & LESS    & {\scriptsize $\pm$ } ({\scriptsize $\pm$ }) & {\scriptsize $\pm$ } ({\scriptsize $\pm$ }) &  &  & 0.44 & 45.8 \\
  & \cellcolor{clustergray}\textbf{\algname{}} 
             & \cellcolor{clustergray}\textbf{48.8{\scriptsize $\pm$ 2.1} (43.5{\scriptsize $\pm$ 1.9})} 
             & \cellcolor{clustergray}\textbf{62.6{\scriptsize $\pm$ 1.0} (52.0{\scriptsize $\pm$ 1.1})} 
             & \cellcolor{clustergray}\textbf{51.7} \\
\midrule
\multirow{2}{*}{\makecell[l]{35\%}} 
  & Random  & 45.6{\scriptsize $\pm$ 3.0} (39.2{\scriptsize $\pm$ 2.3}) & 43.7{\scriptsize $\pm$ 2.3} (36.9{\scriptsize $\pm$ 2.5}) & 41.4 \\
  % & Uniform & 29.7{\scriptsize $\pm$ 5.6} (25.8{\scriptsize $\pm$ 4.1}) & 39.4{\scriptsize $\pm$ 1.9} (32.7{\scriptsize $\pm$ 2.1}) & 31.9 & -38.8 & 0.35 & 57.9 \\
  % & IFD     & 45.5{\scriptsize $\pm$ 1.3} (39.2{\scriptsize $\pm$ 2.0}) & 33.3{\scriptsize $\pm$ 3.0} (27.8{\scriptsize $\pm$ 3.0}) & 36.5 & -30.0 & 0.27 & 67.1 \\
  % & LESS    & {\scriptsize $\pm$ } ({\scriptsize $\pm$ }) & {\scriptsize $\pm$ } ({\scriptsize $\pm$ }) &  &  & 0.28 & 65.7 \\
  & \cellcolor{clustergray}\textbf{\algname{}} 
             & \cellcolor{clustergray}\textbf{47.0{\scriptsize $\pm$ 1.1} (42.1{\scriptsize $\pm$ 1.0})} 
             & \cellcolor{clustergray}\textbf{61.2{\scriptsize $\pm$ 1.3} (50.8{\scriptsize $\pm$ 1.0})} 
             & \cellcolor{clustergray}\textbf{50.3} \\
\bottomrule
\end{tabular}}
\end{table}

Table~\ref{tab:main_evol} reports the Pass@1 results of \textrm{CodeLlama-7B}~\citep{roziere2023code} fine-tuned on \textrm{Evol-Instruct-Code-80K}, comparing our method with random sampling across different data selection ratios.
At every ratio, our method consistently outperforms the random baseline.
Notably, even as the dataset size decreases with higher pruning ratios, the performance degradation remains limited.
In addition, the training time decreases proportionally with the reduced data size, demonstrating that our approach improves training efficiency while maintaining competitive performance.

\begin{table}[t]
\centering
\caption{Ablation study on the effect of the weighting mechanism across different data ratios using \textrm{Evol-Instruct-Code-80K}. %models and datasets
Removing the weighting results in consistent performance degradation, highlighting its contribution to stable learning.}
\label{tab:ablation_weight_appendix}
\resizebox{1\columnwidth}{!}{
\begin{NiceTabular}{l c ccc}
\toprule
\textbf{Data Ratio} & \textbf{Method} & \textbf{HumanEval(+)} & \textbf{MBPP(+)} & \textbf{Avg.} \\
% \midrule
% \multicolumn{5}{c}{\textbf{CodeLlama-Python-7B} fine-tuned on \textbf{Evol-Instruct-Code-80K}} \\
\midrule
\multirow{2}{*}{1.00 (68\%)} &
w/o weighting  & 47.3(41.5) & 56.1(46.9) & 48.0 \\
& \textbf{w/ weighting} & \textbf{50.2(44.5)} 
             & \textbf{62.5(52.1)} & \textbf{52.3} \\
\midrule
\multirow{2}{*}{0.75 (51\%)} &
w/o weighting  & 47.2(40.1) & 61.8(50.2) & 49.8 \\
& \textbf{w/ weighting} & \textbf{48.8(43.5)} 
             & \textbf{62.6(52.0)} & \textbf{51.7} \\
% \midrule
% \multicolumn{5}{c}{\textbf{CodeLlama-Python-7B} fine-tuned on \textbf{Magicoder-OSS-Instruct-75K}} \\
% \midrule
% \multirow{2}{*}{1.00 (79\%)} &
% w/ weighting & \textbf{54.9(49.8)} & \textbf{67.0(55.6)} & \textbf{56.8} \\
% & w/o weighting  & 52.3(47.6) & 66.8(55.3) & 55.5 \\
% \midrule
% \multirow{2}{*}{0.75 (59\%)} &
% w/ weighting & \textbf{54.5(49.4)} & \textbf{66.9(55.3)} & \textbf{56.5} \\
% & w/o weighting  & 52.4(47.4) & 66.3(55.2) & 55.3 \\
\bottomrule
\end{NiceTabular}
}
\end{table}
\subsection{Effect of Weight Scaling}
In Table~\ref{tab:ablation_weight_appendix}, we present an additional ablation study on the weighting mechanism across different data ratios using \textrm{Evol-Instruct-Code-80K}. 
We compare the full method with a variant that removes the weighting scheme during training.
As shown in the table, applying the weighting consistently yields better performance on all evaluation metrics. 
Removing the weighting results in noticeable performance drops across all data ratios, indicating that the weighting mechanism plays an important role in stabilizing training and preserving informative samples during data selection.

\begin{table}[t]
\centering
\caption{Effect of data selection mechanisms on \textrm{Evol-Instruct-Code-80K}.}
\label{tab:ablation_prune_appendix}
\resizebox{1\columnwidth}{!}{
\begin{tabular}{l l ccc}
\toprule
\textbf{Data Ratio} & \textbf{Selection Metric} & \textbf{HumanEval(+)} & \textbf{MBPP(+)} & \textbf{Avg.} \\
% \midrule
% \multicolumn{5}{c}{\textbf{CodeLlama-Python-7B} fine-tuned on \textbf{Evol-Instruct-Code-80K}} \\
\midrule
\multirow{3}{*}{1.00 (68\%)} &
Random & 47.2(42.9) & 62.7(53.1) & 51.5 \\
& Core-centric  & 45.7(41.9) & 62.1(51.8) & 50.4 \\
& Peripheral  & \textbf{50.2(44.6)} & \textbf{62.5(52.1)} & \textbf{52.3} \\
% \midrule
% \multicolumn{5}{c}{\textbf{CodeLlama-Python-7B} fine-tuned on \textbf{Magicoder-OSS-Instruct-75K}} \\
% \midrule
% \multirow{3}{*}{1.00 (79\%)} &
% Random & 53.1(47.8) & 66.3(55.1) & 55.6 \\
% & Core-centric  & 54.5(49.0) & 66.9(55.0) & 56.4 \\
% & Peripheral  & \textbf{54.9(49.8)} 
%              & \textbf{67.0(55.6)} 
%              & \textbf{56.8} \\
\bottomrule
\end{tabular}}
% \vspace{-0.3cm}
\end{table}
\subsection{Effect of Data Selection Metric}
Table~\ref{tab:ablation_prune_appendix} reports the performance of different within-cluster selection metrics.
Among the compared strategies, peripheral selection consistently achieves the best results, outperforming both random and core-centric selection.
This indicates that prioritizing boundary samples yields more informative training signals once cluster-level diversity is already ensured.

\begin{table}[t]
\centering
\caption{Effect of batch size on \textrm{Magicoder-OSS-Instruct-75K}.}
\label{tab:bs_appendix}
\resizebox{1\columnwidth}{!}{
\begin{tabular}{l c ccc}
\toprule
\textbf{Data Ratio} & \textbf{Batch Size} & \textbf{HumanEval(+)} & \textbf{MBPP(+)} & \textbf{Avg.} \\
% \midrule
% \multicolumn{5}{c}{\textbf{CodeLlama-Python-7B} fine-tuned on \textbf{Evol-Instruct-Code-80K}} \\
\midrule
\multirow{2}{*}{1.00 (79\%)}
& 256 & 54.3 (50.0) & 66.8 (55.1) & 56.5 \\
& 512 & 54.9 (49.8) & 67.0 (55.6) & 56.8 \\
\bottomrule
\end{tabular}}
% \vspace{-0.3cm}
\end{table}
\subsection{Effect of Batch Size}
Table~\ref{tab:bs_appendix} reports the sensitivity analysis with different global batch sizes, while fixing the selection ratio to 1.00. 
\algname{} shows stable performance across batch-size settings and consistently outperforms the random baseline. 
These results indicate that \algname{} is not tied to a particular batch size and remains effective under different data-parallel training configurations.

\begin{table}[t]
\centering
\caption{
Performance of sampling strategies for \textrm{Llama-2-13B} on \textrm{MedInstruct-52K}.}
\label{tab:app_13b}
\resizebox{\linewidth}{!}{
\begin{tabular}{ll ccccc}
\toprule
\textbf{Sel. Ratio} & \textbf{Sampling} 
& \textbf{MedMCQA} 
& \textbf{MedQA} 
& \textbf{MMLU} 
& \textbf{PubMedQA} 
& \textbf{Avg.} \\
\midrule

\multirow{1}{*}{\makecell[l]{Full (100\%)}}
  & Random
  & 39.1
  & 43.1
  & \textbf{55.2}
  & \textbf{75.2}
  & \textbf{53.1} \\
\midrule

\multirow{2}{*}{\makecell[l]{1.00 (65.9\%)}}
  & Random
  & 37.0
  & 41.0
  & \textbf{55.0}
  & 73.6
  & 51.7 \\

  & \cellcolor{clustergray}\textbf{\algname{}}
  & \cellcolor{clustergray}\textbf{39.0}
  & \cellcolor{clustergray}\textbf{42.0}
  & \cellcolor{clustergray}54.5
  & \cellcolor{clustergray}\textbf{75.0}
  & \cellcolor{clustergray}\textbf{52.6} \\
\bottomrule
\end{tabular}}
\vspace{-0.3cm}
\end{table}
\subsection{Additional 13B-Scale Evaluation}
\label{appendix:additional_13b}

To further evaluate \algname{} at the 13B scale across different domains, we fine-tune \textrm{Llama-2-13B} on \textrm{MedInstruct-52K}.
We compare \algname{} with full-data training and random sampling under the same effective data budget as \algname{}.
The results are reported in Table~\ref{tab:app_13b}.

At an effective data ratio of 65.9\%, \algname{} achieves an average score of 52.6, outperforming random sampling (51.7) and approaching full-data training (53.1).
In particular, \algname{} achieves performance close to full-data training on MedMCQA and PubMedQA while using substantially fewer training samples.
Together with the CodeLlama-Python-13B experiment in the main text, these results provide additional evidence that the effectiveness of \algname{} extends across distinct 13B instruction-tuning settings and domains.

\subsection{Representative Examples by Cluster}
\label{appendix:semantic_clusters}
To evaluate the semantic coherence of the discovered clusters, we randomly sampled 400 instances from the dataset and analyzed their cluster assignments. For each cluster, we manually inspected the samples to identify the dominant task category and measured the proportion of samples consistent with this category, providing a quantitative estimate of cluster-level semantic alignment.

\begin{table}[t]
\centering
\caption{Manual evaluation of semantic alignment between clusters and task categories.}
\label{tab:cluster_alignment}
\resizebox{1\columnwidth}{!}{
\begin{tabular}{lccc}
\toprule
Cluster & Category & Matches / Total & Agreement \\
\midrule
0 & Algorithm \& Data Structure & 70 / 86 & 81.4\% \\
1 & SQL \& Front-End & 73 / 78 & 93.6\% \\
2 & Mathematical \& Pattern Matching & 110 / 142 & 77.5\% \\
3 & Basic Programming Grammar & 75 / 94 & 79.8\% \\
\midrule
\textbf{Overall} & -- & \textbf{328 / 400} & \textbf{82.0\%} \\
\bottomrule
\end{tabular}
}
\end{table}

\subsubsection{Statistical Analysis of Cluster Semantics.}
Table~\ref{tab:cluster_alignment} presents a manual evaluation of the semantic alignment between the clusters obtained from gradient-based clustering and their dominant task categories.
Each cluster largely corresponds to a distinct task type, such as algorithms, SQL/front-end generation, mathematical reasoning, or basic programming grammar.
Overall, 328 out of 400 samples (82.0\%) match the dominant category of their cluster, indicating that the gradient-based clustering captures meaningful task-level semantics.

\subsubsection{Representative Examples by Cluster}
To illustrate the characteristics of each cluster, we provide representative examples sampled from the \textrm{Evol-Instruct-Code-80K} dataset.  
Gradient embeddings were extracted from the \textrm{CodeLlama-7B} model, and clustering was performed using $k$-means clustering with $k=4$.  
Since each embedding jointly captures both semantic and uncertainty information, the resulting groups can be interpreted as \textit{semantic clusters} that reflect distinct instructional intents and reasoning patterns.  
Each example lists only the \textit{Instruction} for brevity; corresponding outputs are omitted for simplicity.

% ---- Cluster 0 ----
\paragraph{Cluster 0: Algorithm and Data Structure.}
\vspace{-0.5em}
Representative examples involving structural reasoning, data traversal, and algorithmic implementation.

\begin{tcolorbox}[colback=gray!5, colframe=black!40, title=Cluster 0 Examples, fonttitle=\bfseries, breakable]
\begin{enumerate}[leftmargin=1.2em]
    \item Create a class in Python that represents a binary tree with insert, search, delete, and traversal methods.
    \item Provide instructions to create a doubly linked list in Java with methods to add and remove nodes.
    \item Implement a recursive function to compute the depth of a binary tree.
    % \item Create a function that merges two sorted linked lists into a single sorted list.
    \item Write a program that performs Principal Component Analysis (PCA) on a dataset containing numerical variables.
\end{enumerate}
\end{tcolorbox}

% ---- Cluster 1 ----
\paragraph{Cluster 1: SQL and Front-End Interaction.}
\vspace{-0.5em}
Representative examples involving database queries, UI interactions, and event-driven logic.

\begin{tcolorbox}[colback=gray!5, colframe=black!40, title=Cluster 1 Examples, fonttitle=\bfseries, breakable]
\begin{enumerate}[leftmargin=1.2em]
    \item Create a SQL query to show all employee details with their attachments in a specific department.
    \item Write a JavaScript code snippet to add an event listener to a button that validates an input field.
    \item Introduce a condition to change the background color of a given HTML page to green when the user clicks a button.
    \item Write an AngularJS controller to set the “page” variable from URL parameters and validate it.
\end{enumerate}
\end{tcolorbox}

% ---- Cluster 2 ----
\paragraph{Cluster 2: Mathematical and Pattern Matching.}
\vspace{-0.5em}
Representative examples of numerical computation, recursive rules, and pattern-based logic.

\begin{tcolorbox}[colback=gray!5, colframe=black!40, title=Cluster 2 Examples, fonttitle=\bfseries, breakable]
\begin{enumerate}[leftmargin=1.2em]
    \item Extend the given code to print numbers from 1 to 11 incremented by 2 and sum those divisible by 3.
    \item Write an algorithm to print the first 100 prime numbers without using built-in functions.
    \item Write a Python function to add two matrices of the same dimensions without using NumPy.
    \item Write a function to check if the sum of digits of a number is prime.
    % \item Implement a recursive algorithm with memoization to find the nth Fibonacci number.
\end{enumerate}
\end{tcolorbox}

% ---- Cluster 3 ----
\paragraph{Cluster 3: Basic Programming Grammar.}
\vspace{-0.5em}
Representative examples covering fundamental syntax, control flow, and basic data manipulation.

\begin{tcolorbox}[colback=gray!5, colframe=black!40, title=Cluster 3 Examples, fonttitle=\bfseries, breakable]
\begin{enumerate}[leftmargin=1.2em]
    \item Change the following array so that it orders in ascending order.
    \item Replace the given code with an equivalent ternary expression without logical operators.
    \item Write a code that takes a number and prints it out in words.
    % \item Create a Python function that takes two inputs and prints whether they are equal, greater, or smaller.
    \item Print numbers from 1 to 10 using a while loop.
\end{enumerate}
\end{tcolorbox}
% \section{Example Appendix}
\label{sec:appendix}

\end{document}